\DeclareMathOperator*{\argmax}{arg\,max}
\DeclareMathOperator*{\argmin}{arg\,min}
\newcommand{\nopt}{M}
\newcommand{\ourmethod}{\texttt{ROBOT}}
\newcommand{\ourspluslolbo}{\texttt{LOL-ROBOT}} 
\newcommand{\ourmethodM}{\texttt{ROBOT-$\nopt$}} 
\newcommand{\ourmethodK}{\texttt{ROBOT-$k$}} 
\newcommand{\fullmethodname}{Rank-Ordered Bayesian Optimization with Trust Regions}
\newcommand{\bx}{\mathbf{x}}
\newcommand{\bc}{\mathbf{c}}
\newcommand{\bp}{\mathbf{p}}
\newcommand{\by}{\mathbf{y}}
\newcommand{\bz}{\mathbf{z}}
\newcommand{\bw}{\mathbf{w}}
\newcommand{\inputdom}{\mathcal{X}}
\newcommand{\tr}[1]{\mathcal{T}_{#1}}
\newcommand{\lolbo}{\texttt{LOL-BO}}
\newcommand{\seqscbo}{\texttt{Sequential SCBO}}
\newcommand{\seqlolscbo}{\texttt{Sequential LOL-SCBO}}
\newcommand{\siga}{\texttt{Sitagliptin MPO}}
\newcommand{\valt}{\texttt{Valsartan SMARTS}}
\newcommand{\logp}{\texttt{Penalized Log P}}
\newcommand{\turbo}{\texttt{TuRBO}}
\newcommand{\scbo}{\texttt{SCBO}}
\newcommand{\mapE}{\texttt{CVT-MAP-Elites}}
\newcommand{\divf}{\delta} 
\begin{document}
\raggedbottom

\twocolumn[
\aistatstitle{Discovering Many Diverse Solutions with Bayesian Optimization}
\aistatsauthor{ Natalie Maus \And Kaiwen Wu \And  David Eriksson \And Jacob Gardner }
\aistatsaddress{ University of Pennsylvania \And  University of Pennsylvania \And Meta \And University of Pennsylvania} 
]

\begin{abstract}
Bayesian optimization (BO) is a popular approach for sample-efficient optimization of black-box objective functions. 
While BO has been successfully applied to a wide range of scientific applications, traditional approaches to single-objective BO only seek to find a single best solution. 
This can be a significant limitation in situations where solutions may later turn out to be intractable. For example, a designed molecule may turn out to violate constraints that can only be reasonably evaluated after the optimization process has concluded. 
To address this issue, we propose Rank-Ordered Bayesian Optimization with Trust-regions (\ourmethod{}) which aims to find a portfolio of high-performing solutions that are diverse according to a user-specified diversity measure. 
We evaluate \ourmethod{} on several real-world applications and show that it can discover large sets of high-performing diverse solutions while requiring few additional function evaluations compared to finding a single best solution.
\end{abstract}

\section{INTRODUCTION}
Bayesian optimization (BO) \cite{jones1998efficient,shahriari2015taking} is a general framework for optimizing black-box functions $\argmin_{\bx^*} f(\bx^{*})$ in a sample-efficient fashion.
BO has been successfully applied to hyperparameter tuning~\cite{snoek2012practical, turner2021bayesian}, A/B testing~\cite{letham2019noisyei}, chemical engineering~\cite{hernandez2017parallel}, drug discovery~\cite{negoescu2011knowledge}, and more.
For example, $f$ may measure the antibiotic activity of a molecule $\bx$, and we might therefore apply BO to design a molecule with high antibiotic activity.

In many of these settings, however, the fact that BO traditionally seeks a single best optimizer $\bx^{*}$ may be a significant limitation. 
This ``all-or-nothing'' attribute of BO is particularly undesirable for problems where the returned $\bx^{*}$ may indeed optimize some useful objective, but is later found to be unsuitable for unforeseen reasons.
For example, a molecule $\bx^{*}$ may have strong \emph{in vitro} antibiotic activity, but may later be found unsafe or ineffective for use in humans through clinical testing. 
Worse, incorporating constraints like human safety directly into the optimization procedure of unknown molecules seems intractably expensive at best and unethical at worst.

In these settings where the risk of wasting the evaluation budget in search of $\bx^{*}$ is high, practitioners benefit from being given--in addition to the best single optimizer we can find--a series of alternative solutions to the problem: a set of ``back up plans.'' Formally, we might seek a set $S^{*} = \left\{\bx^{*}_{1}, \bx^{*}_{2}, ..., \bx^{*}_{\nopt}\right\}$ of solutions that are all of high objective value, but we may require that these solutions are sufficiently \emph{diverse} to ensure that this set of solutions is less likely to later fail for unrelated reasons. 
The practitioner may therefore provide us with a \emph{symmetric diversity measure} $\divf(\bx, \bx')$ that must exceed some threshold $\tau$ for all pairs of solutions in the set $S^{*}$.
For example, a biochemist may require that molecules in $S^{*}$ have sufficiently low fingerprint similarity~\cite{GuacaMol}. 
Solving this problem efficiently equips practitioners with large sets of \emph{potential} solutions to their true problem, which can mitigate the risk that a single optimizer fails to be useful. 
This diverse solution search problem is challenging to cast under existing BO frameworks. 
The diversity constraints $\divf(\bx^{*}_{i}, \bx^{*}_{j}) \geq \tau$ are challenging to view as traditional black-box constraints, as the constraint functions for the $j$th point in $S^{*}$ depend on the locations of the other points in $S^{*}$.

In this paper, we propose \ourmethod{}, a method to solve the above problem and find a diverse set of high scoring solutions $S^{*}$ so that $\forall \bx^{*}_{i},\bx^{*}_{j} \in S^{*}, \,\delta(\bx^{*}_{i}, \bx^{*}_{j}) \geq \tau$. Across a variety of problem settings ranging from reinforcement learning to molecule design, \ourmethod{} is able to discover large sets of diverse solutions $S^{*}$ with little loss in efficiency over finding a single best solution.

\textbf{Contributions}

\begin{enumerate}
    \item We introduce the problem setting of finding a set of high-performing solutions under a \textit{user-defined diversity measure}. While prior work outside the BO literature has considered a similar setting where diversity is taken to be distance in input space (e.g. \cite{CV-MAP-Elites,MAP-Elites}), this is the first work we are aware of to consider arbitrary, user-defined diversity measures like fingerprint similarity for molecules.
    \item We propose a local Bayesian optimization solution, \ourmethod{}, to this problem that extends to large sample sizes, high dimensional inputs, and structured inputs.
    \item We provide empirical results across challenging, high-dimensional optimization tasks to show that our algorithm can consistently produce large populations of diverse, high-preforming solutions with \textit{virtually no loss in efficiency} compared to finding a \textit{single} solution. 
    \item We additionally demonstrate results on structured drug discovery tasks using the widely used fingerprint similarity function as a diversity measure, demonstrating the value of our approach to practitioners in the physical sciences.
    \item We prove global consistency of \ourmethod{} in \autoref{sec:global_convergence}.
    \item We release an open-source implementation of \ourmethod{} using BoTorch~\citep{balandat2020botorch}.
\end{enumerate}
\section{BACKGROUND AND RELATED WORK}
\label{sec:background}

% Bayesian optimization 
\paragraph{Bayesian optimization.} 
Bayesian optimization (BO) \cite{movckus1975bayesian, SnoekBO} is an approach to sample-efficient black-box optimization that utilizes a probabilistic \textit{surrogate model}--commonly a Gaussian process (GP) \cite{rasmussen2003gaussian}--and an \textit{acquisition function} that leverages the surrogate to find the most promising candidates to evaluate next. 
BO is a sequential optimization algorithm that proceeds in iterations. 
In each iteration, a surrogate model is trained on data collected from evaluating the black-box objective function. 
The acquisition function, defined given the surrogate model's predictive posterior, is then maximized to select one or more candidates to evaluate next, trading off exploration and exploitation.

\paragraph{Parametric Gaussian process regressors (PPGPR).} 
Because we consider tasks with large function evaluation budgets, we use an approximate GP surrogate model. Approximate GP models use inducing point methods in combination with variational inference to allow approximate GP inference on large data sets \cite{hensman2013gaussian,titsias2009variational}. In this work, we use the PPGPR approximate GP model proposed by \citet{PPGPR}, which we found provides substantial improvements in Bayesian optimization performance on the molecule optimization tasks we consider.

\paragraph{Constrained Bayesian optimization.} 
While it may be tempting to attempt to formulate our problem as a constrained BO problem \cite{cei,pesc,scbo}, this is challenging as the constraints depend on the set $S^*$ and are therefore non-stationary. 
In particular, the $i$th point in $S^{*}$, $\bx^{*}_{i}$, must satisfy $i - 1$ constraints that depend on $\bx^{*}_{1},...,\bx^{*}_{i-1}$ which are unknown in advance.
One potential solution is to acquire the points in $S^{*}$ sequentially.
Specifically, unconstrained optimization can be used to obtain $\bx^{*}_{1}$, the second point can then be acquired subject to the single constraint that $d(\bx^{*}_{1}, \bx^{*}_{2}) \geq \tau$, and so on. 
As a baseline, we adapt the work of \cite{scbo} to utilize this strategy, which we refer to as \seqscbo{} in \autoref{sec:experiments}.

\paragraph{Multi-objective Bayesian optimization (MOBO).}
There has been much work in developing new methods for MOBO in recent years \cite{mobo1, mobo2, mobo3, morbo, lambo}. However, these methods cannot be readily applied since our problem setting is quite different. 
Diversity in our setting is not a second objective since we do not try to maximize the diversity, but instead we require that the diversity between pairs of solutions exceed some threshold $\tau$. 
For example, if two molecules are sufficiently diverse, they can be expected to have relatively unrelated chemical properties and further increasing their diversity may not provide much value to the practitioner. 
If one desires to simultaneously maximize the diversity between solutions, this becomes a very different multi-objective problem. 
Using existing MOBO methods for even this different problem setting is itself non-trivial because any diversity measure cannot be measured for a single point in isolation and the Pareto frontier therefore does not exist in the usual sense. 
This prevents the direct application of methods such as that of \citet{R5} which seek to maximize diversity along the Pareto frontier.

\paragraph{Generative modeling for molecules.}
Many generative modeling approaches have been proposed to generate populations novel molecules. This includes variational autoencoder models such as the Junction Tree Variational Auto Encoder \cite{JTVAE} and the SELFIES-VAE \cite{maus2022local}. 
Populations of molecules generated by sampling from these models can be evaluated for diversity and validity using methods such as \citet{R3-1}. 
However, while these models can successfully generate diverse populations of molecules, they are not designed to generate molecules with any particular user specified characteristics. Thus, when we desire a diverse population of molecules which also each have some set of desirable traits, it becomes necessary to use black box optimization tools on top of these generative models. 

\paragraph{Bayesian optimization for molecular design.}
BO has been utilized extensively in recent years for molecular design problems, both over fixed pre-defined lists of existing molecules~\citep{Williams2015-cp, hernandez2017parallel, Graff2021-wr} and by utilizing the continuous latent spaces of variational autoencoders~\citep{gomez2018automatic,eissman2018bayesian,Weighted_Retraining,Huawei,siivola2021good,JTVAE}. 
In latent space optimization, an encoder $\Phi(\bx)$ is used to map molecules $\bx$ to real-valued latent vectors $\bz$. 
BO is then applied in the continuous latent space, and candidate latent vectors are decoded using the decoder $\Gamma(\bz)$ to generate candidate molecules. 
\citet{maus2022local} introduce an extension of \turbo{} \cite{TuRBO} to the latent space optimization setting where the surrogate model and VAE are trained jointly using variational inference -- in our molecular design results in \autoref{sec:experiments} we will make this same adaptation for our method.

%
% \vspace{-2ex}
\paragraph{Population generation algorithms.}
Although some general frameworks have been proposed to extend Bayesian optimization techniques to problems outside of optimization \cite{R3-2}, to the best of our knowledge, this work is the first to consider the setting of finding multiple solutions under user specified diversity constraints using Bayesian optimization. 
We therefore compare to population generation algorithms designed to generate populations of solutions, some of which are designed for ``quality diversity'' \cite{CMAES, MAP-Elites, BOP-ELITES, CV-MAP-Elites, Multitask-MAP-ELITES, gaier2018data, R2}. 
Most relevant is the \mapE{} algorithm \cite{CV-MAP-Elites} which extends much of this work to high-dimensional optimization tasks by avoiding constructing exponential discretiziations as in e.g. \cite{MAP-Elites,BOP-ELITES,gaier2018data}. 
However, these algorithms measure diversity via distance in the search space, and are not straightforward to adapt to user-specified notions of diversity. 
In \autoref{sec:experiments} we show these approaches can fail to find diverse solutions for many optimization tasks when using semantically meaningful notions of diversity.
\vspace{-1ex}

%
% \paragraph{Bayesian optimization in trust regions.}
\paragraph{Trust Region Bayesian Optimization (TuRBO)} Traditional approaches to BO are generally limited to low-dimensional problems with at most twenty tunable parameters~\cite{frazier2018tutorial}. 
Many methods tailored for high-dimensional BO are only suitable for small evaluation budgets and generally make strong assumptions on the underlying black-box objective function~\cite{kandasamy2015high,wang2016rembo,letham2019re,mutny2019efficient}.

\citet{TuRBO} proposed \turbo{}-$M$ which maintains $M$ local optimization runs, each with its own dataset $\mathcal{D}_{i}$ and surrogate model. Each local optimizer $i$ proposes candidates from within a hyper-rectangular \textit{trust region} $\tr{i}$ and a batch of candidates is selected from across all local optimizers in each iteration. Because acquisition is performed globally across all trust regions, local optimizers with the most promising evaluations of the objective receive a larger fraction of the evaluation budget. Each trust region $\tr{i}$ is a rectangular subset of the input space $\mathcal{X}$ centered at the best point found by the $i$th local optimizer--the \textit{incumbent}--$\bx^{+}_i$ and has a side-length $\ell_i \in [\ell_{min}, \ell_{max}]$. If a local optimizer improves upon its own incumbent $\rho_{succ}$ times in a row, $\ell_i$ is increased to $\min(2\ell_i, \ell_{max})$. Similarly, when a local optimizer fails to make progress $\rho_{fail}$ times in a row, the length $\ell_i$ is reduced to $\ell/2$. If $\ell_i < \ell_{min}$, that local optimizer is restarted. 
While \turbo{}-$M$ is not directly applicable to our problem setting, we will also use multiple trust regions with the same length adjustment dynamics. 
Additionally, while \turbo{}-$M$ keeps a separate data history for each trust region, other trust region methods such as \texttt{MORBO} \cite{morbo} allow data sharing such that trust regions can be recentered on candidates from the data history of other trust regions.

\section{METHODS}
\label{sec:methods}
We consider the task of finding a diverse set of $\nopt$ solutions for some high-dimensional objective function $f(\cdot)$. 
For a given input $\bx$, we can evaluate $f(\bx)$ to obtain a (possibly noisy) objective value $y$. 
To measure diversity, we use a symmetric, user-supplied function $\divf{}(\bx_1, \bx_2)$ defined on pairs of points in the search space $\inputdom$. Formally, we seek a sequence $S^{*} := \left\{ \bx^{*}_{1}, \ldots, \bx^{*}_{\nopt} \right\}$ so that:
\begin{align}
    \bx^{*}_{1} &= \argmax_{\bx \in \inputdom} f(\bx) \label{eq:problem_def}\\
    \bx^{*}_{i} &= \argmax_{\bx \in \inputdom} f(\bx)  \; \mathrm{s.t.} \; \delta(\bx_{i}^{*}, \bx_{j}^{*}) \geq \tau \text{ for } j=1,\ldots, i-1 \nonumber 
\end{align}
Under this formalization of the problem setting, the optima in  $S^{*}$ are defined \textit{hierarchically}. In particular, we explicitly still wish to recover the best possible optimizer $\bx^{*}_{1}$ of the original objective. This choice of formalization is deliberate: our goal in this paper is to develop a method that still optimizes the given objective function $f(\cdot)$ as a practitioner may expect, but also produce alternative high quality solutions that are meaningfully different to the optimum as a by-product with as few additional evaluations as possible. 

\subsection{\fullmethodname{} (\ourmethod)} 
\label{sec:ours} 
In this section, we propose \ourmethod{} - an algorithm which extends Bayesian optimization to the problem setting above. We demonstrate the global consistency of our approach in \autoref{sec:global_convergence}. In order to find a set of $\nopt$ solutions, \ourmethod{} maintains $\nopt$ simultaneous local optimization runs using $\nopt$ individual trust regions. As in prior work, trust regions are defined as rectangular regions of the input space, e.g. $\tr{i} \subseteq \mathcal{X}$, with side lengths defined using standard Euclidean distance. We note that it would be interesting to explore the setting where trust region side lengths are instead defined with respect to the diversity measure $\delta(\cdot, \cdot)$; however, the problem of even sampling candidate $\bx$ locations from within such a trust region becomes challenging in the general setting.

Each local run $i$ aims to find a single diverse solution $\bx^{*}_{i}$, which together form the desired set $S^{*}$. In our problem definition, the solution $\bx^{*}_{i}$ is only constrained with respect to prior solutions, e.g. for $j < i$.
Mirroring this, we assign a hierarchical rank-ordering to the $\nopt$ trust regions, $\tr{1}, \tr{2}, ..., \tr{\nopt}$, so that the local optimization run $\tr{i}$ responsible for finding $\bx^{*}_{i}$ is only constrained by local optimizers $\tr{j}$ with $j < i$.

\paragraph{Acquiring candidates.}
Because the diversity constraints are non-stationary, they must be handled in an online fashion as we explore the input space. 
A natural way to accomplish this is to enforce diversity with respect to all candidates chosen by the optimization procedure. 
Mirroring the optimization problem in \autoref{eq:problem_def}, in each iteration of optimization we select candidates $\hat{\bx}_i$ from each trust region $\tr{i}$ to improve over its own incumbent $(\bx_{i}^{+}, y_{i}^{+})$ using a similarly hierarchically-constrained acquisition function:
\begin{align}
\hat{\bx}_1 &= \argmax_{\bx \in \tr{1}} \alpha (\bx;y^{+}_{1}) \nonumber \\
\hat{\bx}_i &= \argmax_{\bx \in \tr{i}} \alpha (\bx;y^{+}_{i})  \; \mathrm{s.t.} \; \delta(\bx, \hat{\bx}_j) \geq \tau \; \forall j < i
\end{align}
Here, $\alpha$ may be a standard maximization acquisition function such as Expected Improvement (EI), and $y^{+}_{i}$  refers to the best objective value observed so far by the $i$th optimizer.
By asymmetrically constraining candidates, we select diverse sets of candidates. Furthermore, because high ranking trust regions $\tr{i}$ are less constrained they are unimpeded by lower-rank trust regions $\tr{j}$ where $i < j$. 
As a consequence of this, the highest ranking trust region, $\tr{1}$, is never impeded by any other trust region. For an illustration of this, see \autoref{fig:diagram}. 
\begin{figure*}[!ht]
    \vskip -0.1in
    \begin{center}
        \centerline{\includegraphics[width=0.68\textwidth]{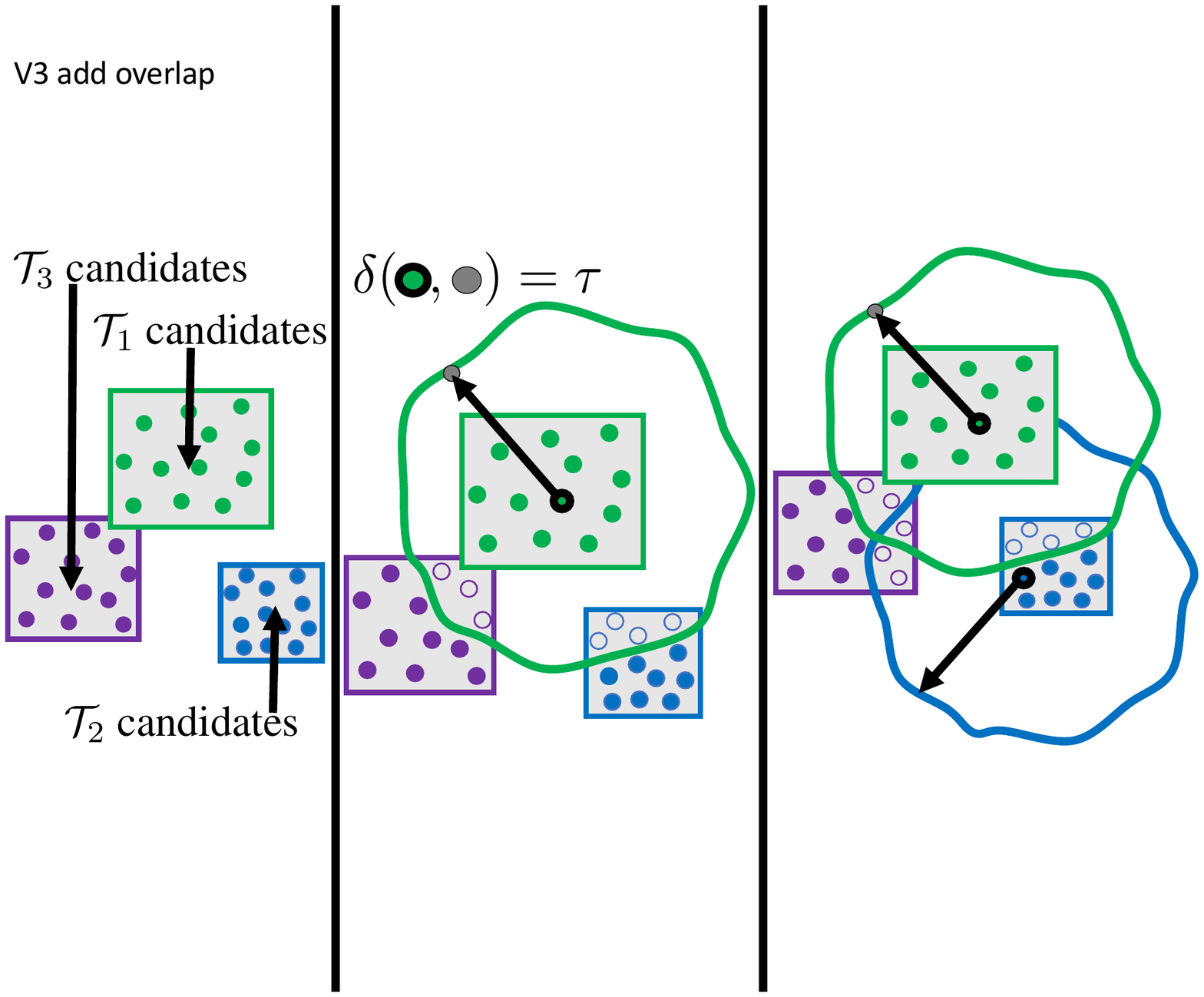}}
        \caption{
            Diagram of three rank-ordered subordinate trust regions, Green $\succ$ Blue $\succ$ Purple. 
            \textbf{(Left)} Each trust region generates $b$ candidates. 
            \textbf{(Middle)} Starting with $\tr{1}$, we discard candidates in subordinate trust regions that violate diversity constraints with candidates in $\tr{1}$. 
            \textbf{(Right)} We repeat this procedure with $\tr{2}$, removing infeasible candidates from $\tr{3}$.}
        \label{fig:diagram} 
    \end{center}
    \vskip -0.2in
\end{figure*}
Since $\delta$ is an arbitrary user-defined black-box function, the above optimization problem is challenging. However, when the acquisition function maximization is approximated via a discretization of the input space $\mathcal{X}$--a relatively common approach--the above optimization remains straightforward. Nevertheless, a reliance on discretization motivates the usage of a modified Thompson sampling procedure that we describe below.

\paragraph{Thompson Sampling} 
Another approach to acquisition is to use hierarchically constrained Thompson Sampling (TS). To accomplish this, we select a candidate $\hat{\bx}_i$ for each trust region $\tr{i}$ one at a time, in rank-order. 
To select a candidate $\hat{\bx}_i$ from $\tr{i}$, we sample $r$ points $C_{i} = \left\{\bc_{i1}, \bc_{i2}, ..., \bc_{ir}\right\}$ from $\tr{i}$.
We then sample a realization $\hat{f}(\bc_{ij})$ for each of these.
Denote by $P_{i}$ the set of all candidate points which have already been selected from each of the higher-ranking trust regions $\tr{1}, \tr{2}, ..., \tr{i-1}$. We select a batch of candidates from among those points in $C_{i}$ that are feasible with respect to all points in $P_{i}$:
\begin{equation*}
    \hat{F} = \{ \bc \mid \bc \in C_{i} , \forall \bp \in P_{i} \; \delta(\bc, \bp) \geq \tau   \}
\end{equation*}
If $\hat{F}$ is empty, then no point in the discretization $C_{i}$ of the interior of $\tr{i}$ was feasible, and we select no candidate from $\tr{i}$ in this round. Because our experiments are run mostly on high dimensional settings where the trust regions are separated by relatively large distances, we found this to be an extremely rare occurrence empirically, happening only a handful of times across all experiments. 

\paragraph{Trust region modifications.}
% In each iteration, each trust region is re-centered on the best observation of its corresponding local run. (This statement is incorrect for our method) 
% Following \cite{MORBO}, we allow data sharing between the trust regions such that trust regions can be recentered on candidates from the data history of other trust regions. 
In each iteration, \ourmethod{} recenters the trust regions such that the current set $S^{+}_{t}$ after iteration $t$ approximating $S^{*}$ is equivalent to the set of all centers $\left\{\bx^{+}_{1},...,\bx^{+}_{\nopt}\right\}$. 
When trust regions select new centers, all diversity constraints change, and since higher-ranking trust regions are unconstrained by subordinate ones, they may re-center on candidates that cause subordinate trust region incumbents to violate these new constraints. This would render some points in $S^{+}_{t}$ suddenly infeasible. 
To remedy this, in each step $t$, we greedily reconstruct the feasible set $S^{+}_{t}$ using the full set of data points $D_{t}$ evaluated so far by all optimizers. In particular, we set $S^{+}_{t}=\{\bx'^{(t)}_{1},...,\bx'^{(t)}_{\nopt}\}$, where:
\begin{align}
    \bx'^{(t)}_{1} &= \argmax_{(\bx,y) \in D_{t}} y \nonumber \\
    \bx'^{(t)}_{i} &= \argmax_{(\bx,y) \in D_{t}} y \;\; \mathrm{s.t.} \;\; \forall j < i \;\; \divf{}(\bx, \bx'^{(t)}_{j}) \geq \tau 
\end{align}
We then re-center trust region $\tr{i}$ on $\bx'^{(t)}_{i}$. 
As a result, $\tr{1}$ is always centered on the best-scoring point found by any trust region, $\tr{2}$  is centered on the best remaining point which is sufficiently diverse from the new center of $\tr{1}$, and so on.
In addition to recentering, we note that optimizer $i$ is only attempting to improve on its own current incumbent objective value (subject to its own diversity constraints), and not trying to improve over the globally best value observed so far (e.g. $y^{+}_{1}$). Therefore, trust region successes and failures as described in \autoref{sec:background} are defined with respect to each trust region's own incumbent.

\paragraph{Global surrogate model}
The recentering procedure described above can recenter a trust region $\tr{i}$ on any data point in the entire optimization history $D_{t}$, not merely its own local optimization history. This makes the use of local GP surrogate models for each trust region ill suited to the task, as $\tr{i}$ may move to locations that were not acquired by its own local surrogate. $\ourmethod$ therefore instead maintains a single global surrogate model across all $\nopt$ trust regions. The benefit of this is isolated in \autoref{fig:valt_ablation}.

\section{EXPERIMENTS}
\label{sec:experiments}
We apply \ourmethod{} to five high-dimensional BO tasks for which finding a diverse set of solutions is desirable. 
We additionally optimize diverse S\&P $500$ investment portfolios in the appendix.
Three of these tasks are continuous problems that enable direct application of \ourmethod{} as described in \autoref{sec:methods}. 
The last three are structured molecule optimization tasks.

\paragraph{Implementation details and hyperparameters.} 
We implement \ourmethod{} leveraging BoTorch~\cite{balandat2020botorch} and GPyTorch~\cite{gardner2018gpytorch}, with code available at \url{https://github.com/nataliemaus/robot}.
All trust region hyperparameters are set to the \turbo{} defaults as used in \citet{TuRBO}. 
Particular choices of new task-specific parameters, $\nopt$, $\tau$, and $\divf{}(\cdot)$ are motivated in the corresponding section for each task. 
Since we consider large numbers of function evaluations for several tasks, we use an approximate GP surrogate model. 
In particular, we use a PPGPR for all tasks \cite{PPGPR}. 
The number of random points used to initialize optimization is kept consistent across all methods compared for each task and is included in the x-axis of all plots.
We use Thompson sampling for all experiments.
See \autoref{sec:detials} for additional implementation details. 

\paragraph{Extending \ourmethod{} to the structured BO setting.} 

To extend \ourmethod{} to the structured setting for the molecule optimization tasks, we use the pre-trained SELFIES VAE introduced by \citet{maus2022local} to map from the structured molecule space to a continuous search space where Bayesian optimization can be directly applied. 
Additionally, as in \texttt{LOL-BO} \citep{maus2022local}, we periodically train the surrogate model jointly with the VAE end-to-end in order to encourage the continuous latent space to be organized in a way that is more amenable to optimization.

We refer to this extension of \ourmethod{} as \ourspluslolbo{}.

\paragraph{Plots.} 
For each task, we plot the objective value of the current feasible set of $\nopt$ solutions obtained after a certain number of function evaluations. 
All plots with $\nopt = 1$ show the objective value of the single best solution found, and are included to highlight the loss in efficiency incurred by all methods when seeking larger sets of solutions. 
For baselines such as Standard BO, \turbo{}, and \turbo-$M$ which are designed to find a single solution rather than a diverse set, we plot the mean objective value of the best $\nopt$ feasible solutions found in the history of the run. 

All plots are averaged over multiple runs and show standard errors. 
The expensive Guacamol experiments used $10$ repetitions, while all others used $20$. On many plots we include runs of \ourmethodK{}---i.e., our method seeking $k$ solutions---for $k > \nopt$. While one would in practice always run \ourmethod-$\nopt$ when seeking $\nopt$ solutions, these results demonstrate the loss of efficiency of discovering $\nopt$ solutions when seeking more.
For plots where we show figures with different $M$, we do not plot \ourmethodK{} for $k < M$. 

\paragraph{Baselines.} 
In all plots, we compare \ourmethod{} against \turbo{}, \turbo-$M$, and an alternative diverse optimization algorithm involving sequential runs of constrained \turbo{} (see \texttt{Sequential SCBO} description below). 
Since these algorithms are variants of \turbo{}, each can be adapted to the latent space BO setting using an analogous version of \texttt{LOL-BO}.

We denote these latent space adaptations using notation: \lolbo{}, \lolbo{}-$M$, and \seqlolscbo{}. 
Applying end-to-end training to each baseline allows them to be directly compared to \ourspluslolbo{} for the three molecular tasks. 

Note that when $M=1$, \turbo{}-$M$, and \texttt{Sequential SCBO} are the same algorithm so we only plot \turbo{}. 

Although baselines such as \texttt{Standard BO}, \turbo{}, and \turbo{}-$M$ are designed to find a single solution rather than $M$ diverse solutions, we compare to them in plots with $M > 1$ by plotting the mean of the best $M$ diverse solutions found by the method along the way. 

\paragraph{Sequential \scbo{} Baseline} 
As discussed in \autoref{sec:background}, we can cast our problems as a constrained optimization problem if the solutions in $S^{*}$ are generated sequentially--e.g., the constraints for $\bx^{*}_{2}$ are well defined given a fixed $\bx^{*}_{1}$. 
To directly compare against this alternative, we run \scbo{} $\nopt$ times in a row, where the $i$th run of \scbo{} has diversity constraints against all solutions found from runs $j < i$. 
We additionally make several modifications to improve the efficiency of this algorithm. We start each sequential run from the best point observed on any previous run which meets the new set of constraints. Additionally, we maintain the same surrogate model across sequential runs rather than discard data.

\subsection{Continuous BO Tasks}
\label{sec:continuous bo}
In this section, we consider two optimization tasks for which finding a diverse set of solutions is useful--optimizing the trajectory of a mars rover, and optimizing the policy used by a lunar landing device. 

\subsubsection{Rover}
The rover trajectory optimization task consists of finding a $60$ dimensional policy that allows a rover to move along some trajectory while avoiding a set of obstacles~\cite{ebo}. 
This optimization problem is useful as it allows us to directly visualize the diverse paths found by optimization.

\begin{figure*}[!ht]
    \vskip -0.2in
    \begin{center}
        \centerline{\includegraphics[width=\textwidth]{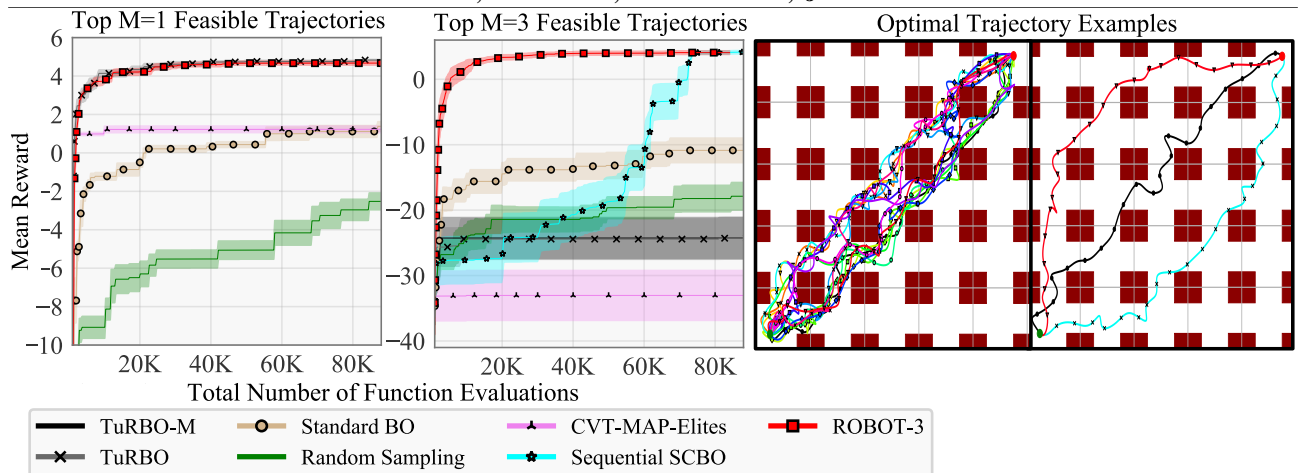}}
        \caption{
            \textbf{(Left, Middle Left)} Rover optimization results where feasible trajectories have a minimum one-way-distance (OWD) of $0.15$.
            Note that in the leftmost figure, \turbo{} and \ourmethod{} preform similarly so the curves overlap and are difficult to differentiate. Similarly for \turbo{} and \turbo{}-$M$ in the middle left figure.
            \textbf{(Middle Right)} $15$ optimized trajectories found by $15$ individual runs of \turbo{} (without diversity constraints).
            \textbf{(Right)} Three diverse trajectories found by \ourmethod{}.}
        \label{fig:rover} 
    \end{center}
    \vskip -0.2in
\end{figure*}

\paragraph{Diversity function $\divf{}$ and threshold $\tau$}
A meaningful diversity measure is one that requires the resulting trajectories to take distinct routes around the obstacles. 
To measure the distance between two trajectories, we use the one-way-distance ($\divf{}_{OWD}$) metric from \cite{owd}.
The obstacles used in the rover environment are squares of side length $0.05$ and all four sides of each obstacle are $0.1$ from the side of some other obstacle (see \autoref{fig:rover}). 
Since our goal is to find diverse trajectories which take different routes around the obstacles, we therefore set $\tau = 0.15$.

\paragraph{Results.} 
Results on this task for $\nopt = 3$ trajectories are displayed in \autoref{fig:rover}. 
The leftmost figure depicts convergence speed of the top trajectory optimized only. \ourmethodM{} incurs no decrease in efficiency for finding the best trajectory for $M=3$, desite also finding (middle left) three diverse trajectories of equivalent reward to the first. 
Although Standard BO fails to find a good single solution (leftmost figure), it outperforms \turbo{} when finding $M > 1$ solutions (middle left figure). Standard BO is less myopic, and therefore finds a larger number of diverse trajectories with positive reward. 
\mapE{} performs worse than all other baselines when we take the average of the top three diverse solutions found (middle left figure). Likely, this is due to the usage of input space distance as a diversity measure, which does not necessarily correlate with the more semantically meaningful diversity measure $\divf{}$. 
Trajectories found by multiple runs of \turbo{} and a single run of \ourmethod{} are depicted in the middle right and rightmost plots, clearly demonstrating diversity. 

\subsubsection{Learning Robust Lunar Lander Policies}
The lunar lander optimization task seeks a control policy that enables an autonomous lander to land without crashing on a randomly generated set of terrain environments. 
The objective function is defined as the average reward of the policy obtained on a set of environments. 
We optimize the same controller as in~\citep{TuRBO}. 
Although \turbo{} finds policies that land on the training environments, we find that these policies sometimes crash when tested on unseen environments.
\paragraph{Diversity function $\divf{}$ and threshold $\tau$.}
Since there is no obvious semantically meaningful measure of diversity between two policies for this task, we define $\divf{}$ to be the euclidean distance between two policies. 
We choose $\tau = 0.6$ since, for larger values of $\tau$, the random set of 1024 policies used to initialize optimization often did not contain a sufficient number of feasible policies to start from. 
\paragraph{Results.} 
To demonstrate the value of this notion of diversity in this setting, we use \ourmethod{} to find a diverse set of 20 policies $S^{*}$ and then construct a single robust policy which simply takes the majority vote action of the diverse policies $\bx^*_i$ at every step. 
For comparison, we generate twenty policies by running \turbo{} sequentially twenty times (requiring $20 \times$ as many evaluations). 
In \autoref{fig:lunar} (right), we plot a histogram of rewards obtained by each of these strategies on 200 unseen environments. 
Without diversity constraints, the policies obtained by \turbo{} occasionally achieve catastrophically low rewards. 
However, the ensembled 20 diverse policies never fail to land across this larger set of environments.

To demonstrate optimization efficiency, we plot function evaluations versus mean objective value found in \autoref{fig:lunar}. 
We show results for optimizing a set of $\nopt = 1$ and $20$ feasible policies. Despite distributing its evaluation budget to find twenty diverse solutions, \ourmethod{} incurs only a $3 \times$ slowdown. 
Although $\mapE{}$ converges faster, $\ourmethod{}$ eventually obtains a higher mean objective value. This task is particularly well suited for \mapE{} as it uses input space diversity measures. 
This task is one of the least successful for \ourmethod{}, as discovering 20 policies requires roughly $6\times$ as many evaluations, where as most tasks in this section require significantly fewer additional evaluations. 
Nevertheless, this is still better than linear slowdown.

\begin{figure*}[!ht]
    \vskip 0.2in
    \begin{center}
        \centerline{\includegraphics[width=0.96\textwidth]{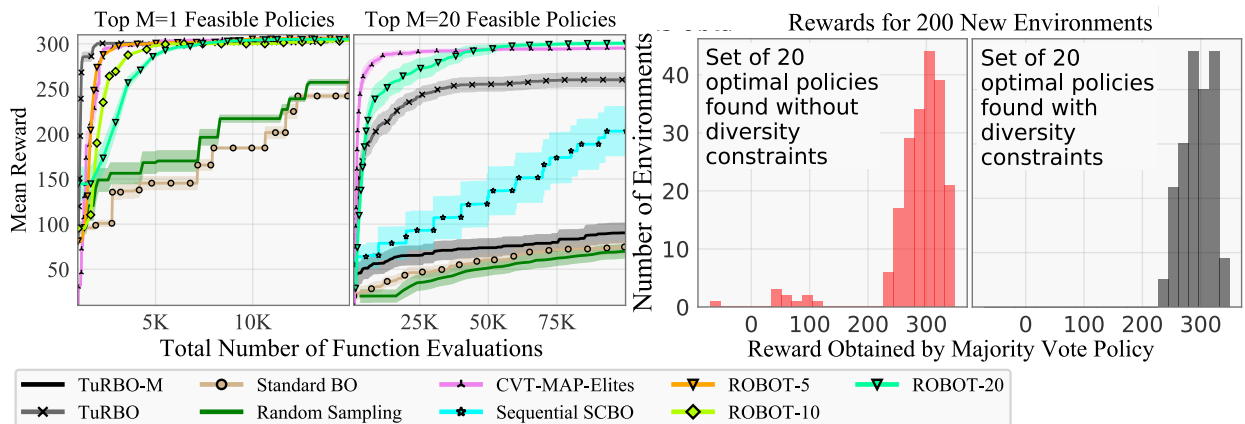}}
        \caption{\textbf{(Left, Middle Left)} Lunar lander optimization, feasible policies have a minimum Euclidean distance of 0.6 in parameter space.
        Note that in the leftmost figure, \texttt{CVT-MAP-ELITES} and \ourmethod{}-$5$ preform similarly so the curves overlap and are difficult to differentiate. \textbf{(Middle Right, Right)} Rewards obtained by majority vote ensembling $20$ policies for $200$ new environments. \textbf{(Middle Right)} Set of policies obtained by $20$ independent runs of \turbo{}. \textbf{(Right)} Set of policies obtained by a single run of \ourmethod{}-$20$. }
        \label{fig:lunar} 
    \end{center}
    \vskip -0.2in
\end{figure*}

\subsection{Molecular BO Tasks}
\label{sec:mols}
The Guacamol benchmark suite~\cite{GuacaMol} contains scoring oracles for a variety of molecule design tasks, with scores ranging between 0 and 1. 
Of these tasks, we select two, \siga{} and \valt{}, for which high-scoring molecules found by \lolbo{} tended to have high fingerprint similarity, making a search for diverse solutions particularly desirable. The task definitions are discussed in \citet{GuacaMol}. 
Because it is arguably the most commonly studied molecule optimization problem, we include \logp{} results in the appendix.

\paragraph{Additional baseline.} 
In the molecule design setting, additional methods outside of BO have been proposed that specifically produce populations of solutions. 
While these populations are not specifically constrained to be diverse in any way, the Guacamol scoring procedure often scores the entire population generated. 
Therefore, we additionally compare to \texttt{Graph GA} \cite{graphGA}, one of the top performing methods on the Guacamol benchmark leaderboard.

\begin{figure*}[!ht]
    \vskip 0.2in
    \begin{center}
        \centerline{\includegraphics[width=0.96\textwidth]{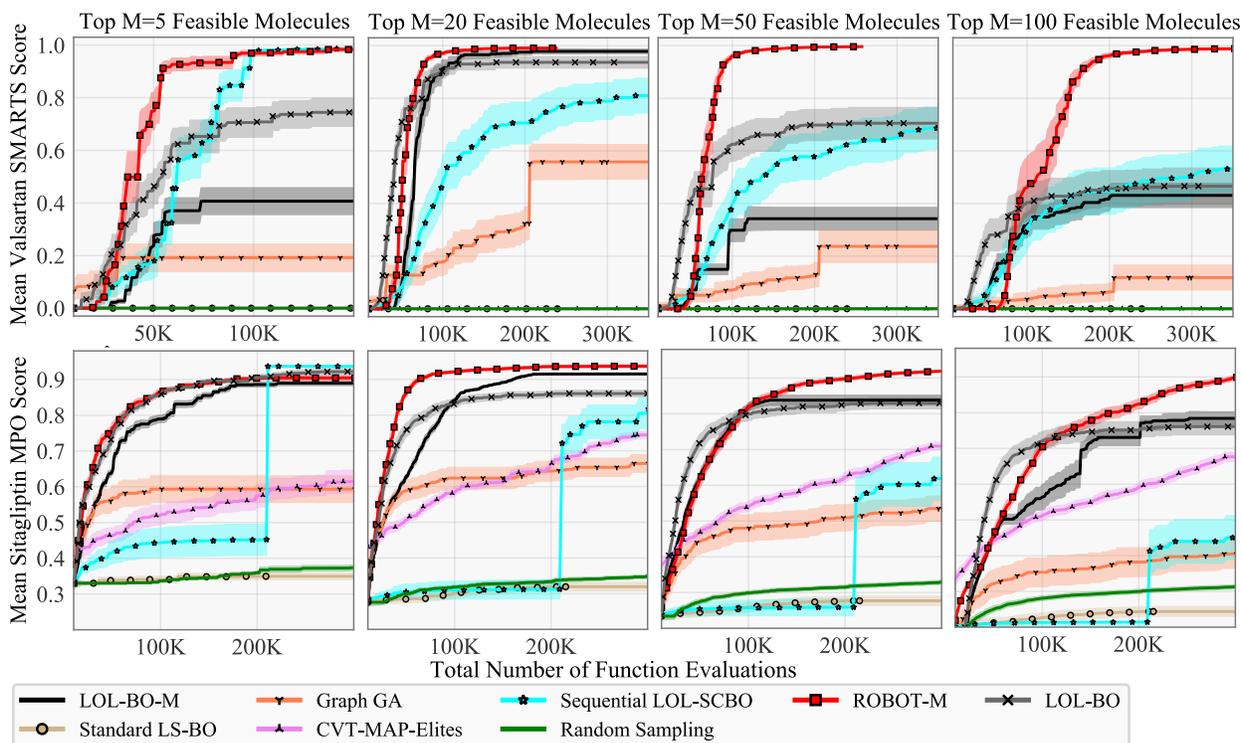}}
        \caption{Comparing to baselines for finding solution sets of $\nopt=5,20,50,100$ solutions to GuacaMol molecule design tasks. Tight constraint ($\tau = -0.4$) used for $\nopt=5$ and relaxed constraint ($\tau = -0.53$) used for $\nopt=20, 50, 100$. Note that in the top row figures, \texttt{Random Sampling}, \texttt{CVT-MAP-ELITES}, and \texttt{Standard LS-BO} all fail to make any progress so the curves overlap..}
        \label{fig:valt053} 
    \end{center}
    \vskip -0.2in
\end{figure*}

\paragraph{Diversity function $\divf{}$ and threshold $\tau$}
% molecules show histogram ? Include in supplementary at least 
Fingerprint similarity (FPS) measures how similar two molecules are \cite{GuacaMol}. 
We therefore define the diversity function $\divf{}$ between two molecules to be the negative of their fingerprint similarity. 
We evaluate finding solutions in two settings: one where we seek a small set of highly diverse solutions, and one where we seek a very large set of moderately diverse solutions. 
Since random pairs of molecules in the ZINC database \cite{irwin2020zinc20} have FPS of $\sim0.4$ on average, we use $\tau = -0.4$ for the highly diverse setting. For the moderately diverse setting, we relax the constraint to $\tau=-0.53$ and sets of up to $100$ molecules.  
\paragraph{Results}
In \autoref{fig:valt053} we depict optimization results for finding $\nopt=5,20,50,100$ diverse solutions on the \valt{} and \siga{} optimization tasks. \ourmethod{} consistently outperforms all other baselines. Notably, across all regimes, Graph-GA produces significantly fewer high scoring yet diverse molecules despite returning an entire population of solutions by default. 

\begin{figure*}[!ht]
    % \vskip 0.2in
    \begin{center}
        \centerline{\includegraphics[width=\textwidth]{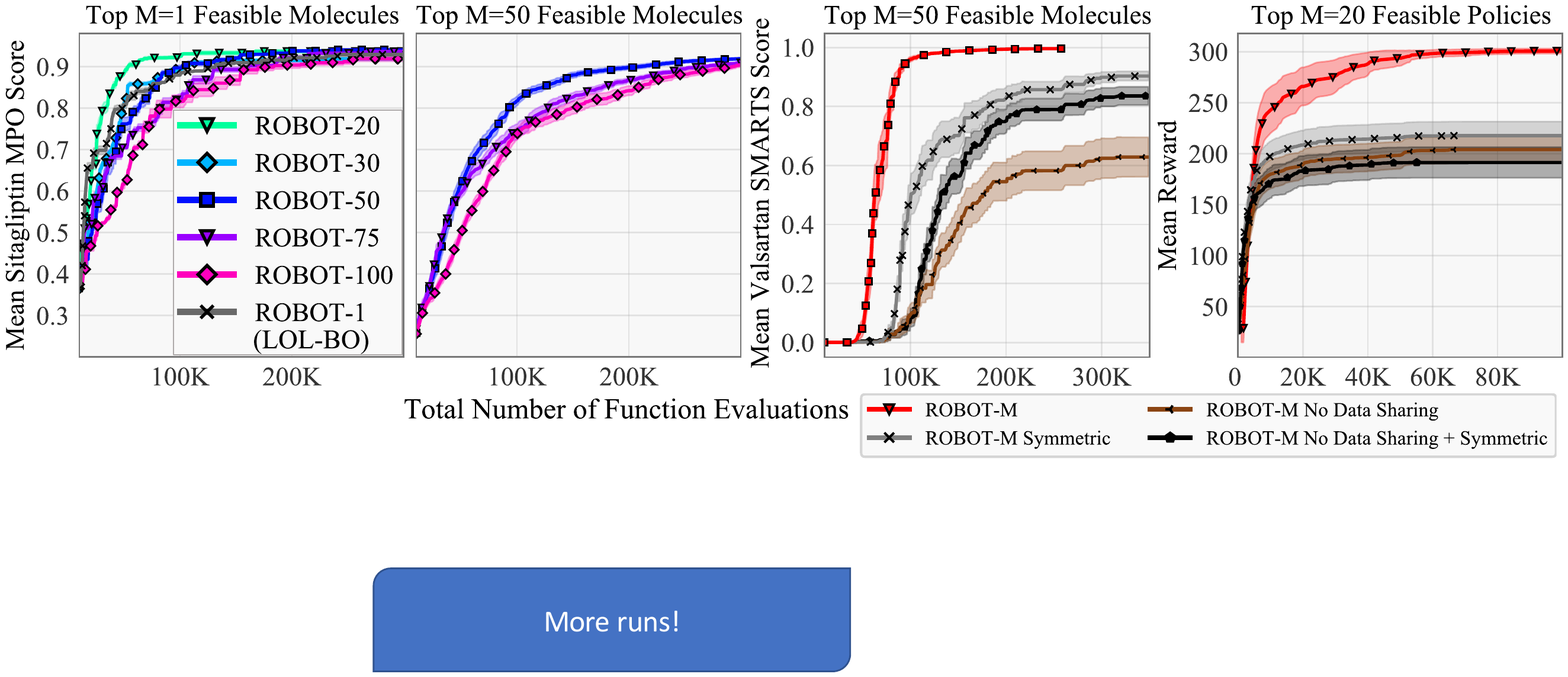}}
        \caption{
            Ablations. \textbf{(Left, Middle Left)} Comparing convergence speed of \ourmethod{}-$K$ on \siga{} with different $K$. \ourmethod{} loses minimal efficiency up to $K=50$, even when searching for $K > M$ molecules. Here we use the relaxed constraint $\tau = -0.53$. See appendix for similar ablation with the tighter constraint $\tau = -0.4$, and repeated ablation on \valt{}.
            \textbf{(Middle Right, Right)} Comparing \ourmethod{} with various components removed for \valt{} and Lunar Lander tasks respectively. }
        \label{fig:valt_ablation} 
    \end{center}
    \vskip -0.2in
\end{figure*}
\subsection{Ablation Studies.} 
In this section, we perform ablation studies evaluating the effect of $K$ on the convergence of \ourmethod{}, as well as the various components of \ourmethod{}.

\subsubsection{How Does $K$ Affect the Convergence Rate of \ourmethod{}-$K$?}
As \ourmethod{}-$K$ is asked to find more diverse solutions (i.e., as $K$ increases), it is reasonable to expect that the convergence in the top $M<K$ solutions becomes slower. 
This is because a fixed evaluation budget must be split $K$ ways for \ourmethod{}-$K$. Thus, when seeking $M$ solutions, the best strategy is to run \ourmethod{}-$M$. However, \ourmethod{} includes a number of features that try to mitigate the loss of efficiency as $K$ increases by sharing information across the $K$ solutions.

To study the overall impact of these features, in this section we run \ourmethod{} to find $\nopt$ solutions, but run \ourmethod{}-$K$ with various $K \geq \nopt$ (e.g., we ask \ourmethod{} to find more than $\nopt$ solutions. We then evaluate the performance of these \ourmethod{}-$K$ runs on \textit{only the top $\nopt$} solutions. If the performance loss as $K$ increases is small, we expect that \ourmethod{}-$K$ should be comparable in convergence to \ourmethod{}-\nopt, even when $K > \nopt$.

Results of this study are in \autoref{fig:valt_ablation} \textbf{Left, Middle Left}, where we ablate \ourmethod{} with various $\nopt$ on \siga{}. Even for $K=50$, $\ourmethod{}$ incurs negligible slowdown compared to finding a single solution ($M=1$), despite the evaluation budget being split between $50$ solutions. In \autoref{fig:valt053ablation} we provide results on \valt{}, and in \autoref{fig:valtsiga04ablation}, we provide an ablation with $\tau = -0.4$, where we observe that \ourmethod{}-$5$ converges at the same speed in terms of finding the single best $1$ value, but is able to find $5$ molecules with nearly the same score with negligible additional evaluations.

\subsubsection{How Do the Different Components of \ourmethod{} Affect Performance?} In \autoref{fig:valt_ablation} \textbf{(Middle Right, Right)}, we evaluate the components of \ourmethod{}. 

To evaluate the gain in optimization performance achieved by using a trust region hierarchy to asymmetrically throw out candidates in acquisition, we compare to the simpler approach of symmetrically discarding all pairs of infeasible candidates. In particular, this involves discarding all pairs of candidates that do not satisfy the diversity constraints from across all trust regions regardless of the trust region hierarchy. 
We refer to this version of \ourmethod{} as \ourmethod{} \texttt{Symmetric}.
Results in \autoref{fig:valt_ablation} \textbf{(Middle Right, Right)} show that \ourmethod{} consistently out preforms \ourmethod{} \texttt{Symmetric}. This indicates that asymmetrically discarding candidates according to the trust region hierarchy is essential for good performance.

To evaluate the gain in optimization performance achieved by allowing collaboration (data sharing) between trust regions, we compare to a version of \ourmethod{} without data sharing. 
In particular, for this version of \ourmethod{} we do not allow trust regions to recenter on data points found other trust regions. Additionally, we use a separate surrogate model for each trust region rather than a single global surrogate model. In this case, each surrogate model is only updated on the data found by its corresponding trust region. 
We refer to this version of \ourmethod{} as \ourmethod{} \texttt{No Data Sharing}.
Results in \autoref{fig:valt_ablation} \textbf{(Middle Right, Right)} show that \ourmethod{} consistently out preforms \ourmethod{} \texttt{No Data Sharing}. This indicates that using a global surrogate model and sharing data across the trust regions is essential for good performance. 

Additionally, we compare to a version of our method where we both do not allow data sharing and symmetrically discard candidates. 
We refer to this version of our method as \ourmethod{} \texttt{No Data Sharing + Symmetric}. 
Results in \autoref{fig:valt_ablation} \textbf{(Middle Right, Right)} show that \ourmethod{} consistently out preforms \ourmethod{} \texttt{No Data Sharing + Symmetric}.

% We evaluate 1) the gain in optimization performance achieved by using a trust region hierarchy to asymmetrically throw out candidates in acquisition (compared to the simpler approach of symmetrically discarding all pairs of infeasible candidates) and 2) the gain in optimization performance achieved by allowing collaboration (data sharing) between trust regions.
% We see that both components are essential for good performance. 
% We see that both components are essential for good performance. 

\section{DISCUSSION}
In real world settings, the objective function fed to the optimization routine often tells only part of the story. Practitioners often have preferences for solutions beyond simple objective value. By discovering solutions under \textit{arbitrary user specified} diversity measures rather than input space measures alone, we believe this work may help make Bayesian optimization applicable in scenarios where an ``all-or-nothing'' approach may not be viable. In these scenarios, rather than being a final source of truth, the optimizer can be instead deployed as a tool to generate \textit{suggestions} to a practitioner in a human-in-the-loop fashion, who may ultimately rely on domain expertise to choose the best solution.

\section{Acknowledgements}
\label{sec:acknowledgements}
JRG and NM were supported by NSF award IIS-2145644.

\bibliographystyle{abbrvnat}
\bibliography{ref}

%%%%%%%%%%%%%%%%%%%%%%%%%%%%%%%%%%%
%%%%%% SUPPLEMENT (OPTIONAL) %%%%%%
%%%%%%%%%%%%%%%%%%%%%%%%%%%%%%%%%%%

\clearpage
\onecolumn

\hsize\textwidth\linewidth\hsize\toptitlebar 
{\centering{\Large\bfseries Discovering Many Diverse Solutions with Bayesian Optimization \\ Supplementary Materials \par}}
\bottomtitlebar

\appendix

\section{APPENDIX}

\subsection{\ourmethod{} Initialization Details}

\ourmethod{} is initialized with a small set of quasi-random data. 
The $M$ trust regions are initialized one at a time (in rank-order). 
Each trust region is centered on the highest-scoring point in the initial data which is sufficiently diverse from the centers of all higher-ranking trust regions. 
See \autoref{tab:init} for the number of initialization data points used for each task.
All baseline methods use the same number of initialization data points for each task. 
\begin{table}[h]
\caption{Number of random initialization data points used to initialize all methods for each task in \autoref{sec:experiments} } \label{tab:init}
\begin{center}
\begin{tabular}{ll}
\textbf{TASK}  &\textbf{NUMBER OF POINTS} \\
\hline \\
Rover   & $1024$ \\
Lunar Lander  & $1024$ \\
Stock Portfolio Diversification & $1024$ \\
Molecule Tasks  & $10$,$000$ \\
\end{tabular}
\end{center}
\end{table}

\subsection{Additional Implementation Details}
\label{sec:detials}
In this section, we provide additional implementation details for \ourmethod{} and baseline methods.

\subsubsection{Surrogate Model}
As discussed in \autoref{sec:experiments}, we use a PPGPR \cite{PPGPR} surrogate model. To maintain fair comparison, we use the same surrogate model with the same configuration for \ourmethod{} and all baseline Bayesian optimization methods. 
We use a PPGPR with a constant mean and standard RBF kernel. 
Due to the high dimensionality of our chosen tasks, use a deep kernel (several fully connected layers between the search space and the GP kernel). 
In particular, we use two fully connected layers with $32$ nodes each. 
We update the parameters of the PPGPR during optimization by training it on collected data using the Adam optimizer with a learning rate of $0.001$. 
The PPGPR is initially trained on a small set of initialization data for $20$ epochs.
For the number of initialization data points used for each task, see \autoref{tab:init}.
On each step of optimization, the model is updated on the newly collected data for $2$ epochs. 
This is kept consistent across all Bayesian optimization methods. 

\subsubsection{Details for Non-BO Methods}

For the CVT-Map-Elits method, we use all default hyper-parameters from \cite{CV-MAP-Elites} with the ``Number of Niches" parameter set equal to $M$ (the number of diverse solutions we want to find).
For the Graph GA method we use the implementation provided by \cite{GuacaMol} which returns a population of molecules by default. We use the default hyper-parameters from \cite{graphGA}.  
For random sampling, we select query points in the search space uniformly at random, and check to make sure that we never select the same point twice. 

\subsection{Additional Experiments}
In this section, we provide some additional experimental results, including analysis for \ourmethod{} on the two additional optimization tasks mentioned in \autoref{sec:experiments}. 

% STOCKS: 
\subsubsection{Stock Portfolio Diversification}
\label{sec:stocks} 
In portfolio optimization, the goal is to find a $500$-dimensional weight vector $\bx$ giving the optimal fraction of a principal that should be invested in each company in the S$\&$P 500 in order to maximize return while minimizing volatility. We quantify the return/volatility trade-off by directly maximizing the Sharpe ratio of the portfolio, which we compute using the past three years of data from the S$\&$P $500$. For our toy setting, we make the simplifying assumption that risk-free return is always zero, such that the Sharpe ratio is defined by the equation: $\frac{ROI}{\sigma \sqrt{252}}$. Here, the $ROI$ is the total return on the investment, $\sigma$ is the total volatility (the standard deviation of day-to-day changes in return) and $\sqrt{252}$ is a standard normalizing constant which accounts for the number of trading days in a year. 

\paragraph{Diversity function $\divf{}$ and threshold $\tau$}
We define the diversity function $\divf{}$ between two portfolios to be the maximum integer $k$ such that the two portfolios have no stocks in common in their top $k$ invested stocks. We require feasible portfolios to have $\tau = 10$ different companies in the top $10$ -- that is, the top 10 stocks must be disjoint.

\paragraph{Results.} 
\begin{figure*}[!ht]
    \vskip 0.2in
    \begin{center}
        \centerline{\includegraphics[width=0.8\textwidth]{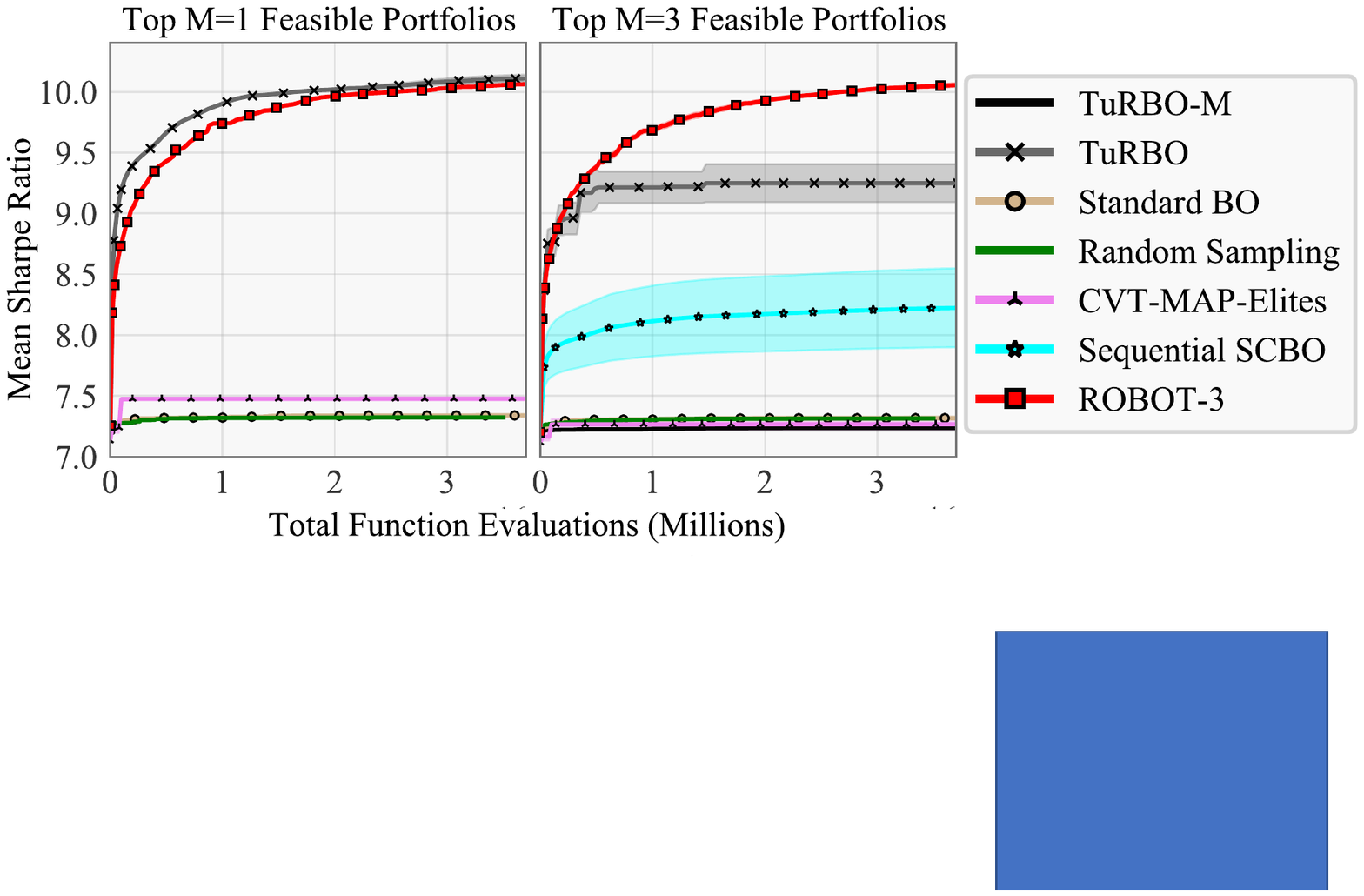}}
        \caption{S$\&$P 500 optimization task. Feasible portfolios have a minimum diversity of $\tau = 10$. }
        \label{fig:stocks}
    \end{center}
    \vskip -0.2in
\end{figure*}
Results on this task for $\nopt = 1$ and $\nopt = 3$ trajectories are displayed in \autoref{fig:stocks}. \ourmethod{}-$3$ converges at nearly the same speed in terms of finding the single best 1 portfolio, and is simultaneously able to find $3$ portfolios with nearly the same Sharpe ratio with negligible additional evaluations. 

\subsubsection{Penalized Log P Optimization}
\label{sec:logp} 
In this section, we provide results for a third molecular optimization task - \logp{}. As in \autoref{sec:mols}, we use negative fingerprint similarity for our diversity function $\divf{}$ and consider the two settings with tight and relaxed constraints ($\tau = -0.4$ and $\tau = -0.53$ respectively). Since \citet{maus2022local} showed that the \logp{} oracle can be exploited to produce high-scoring molecules which are wholly unrealistic, we constrain the decoder of the VAE to producing SELFIES strings of 400 tokens or fewer. With this added constraint, \texttt{LOL-BO} achieves \logp{} scores of $\sim100$ rather than $\sim500$ \autoref{fig:logp04}. Since the \logp{} scores reported by \citet{graphGA} are not competitive with $\sim100$ (\citet{graphGA} reports maximum \logp{} scores of $\sim12$), we do not compare to Graph-GA for this task. 

\paragraph{Results with smaller $\nopt$ and tighter constraints.}
In \autoref{fig:logp04} we depict optimization results for finding $\nopt=1$ and $\nopt=5$ diverse solutions on the \logp{} optimization task. In the $\nopt=1$ case (left panel in \autoref{fig:logp04}), \ourmethod{}-$3$, \ourmethod{}-$5$, and \ourmethod{}-$10$ surprisingly appear to find the best molecule a bit faster than \texttt{LOL-BO} despite simultaneously searching for other diverse solutions. A possible explanation for this is that \texttt{LOL-BO} uses only one trust region and is therefore more myopic than \ourmethodM{}. It appears that when $M$ is small, the additional exploration done by \ourmethodM{} can actually allow for a single best solution to be found more quickly. Furthermore, despite searching for $5$ diverse molecules, \ourmethod{}-$5$ and is simultaneously able to find $5$ molecules with nearly the same score with negligible additional evaluations (right panel in \autoref{fig:logp04}). 
\begin{figure*}[!ht]
    \vskip 0.2in
    \begin{center}
        \centerline{\includegraphics[width=0.8\textwidth]{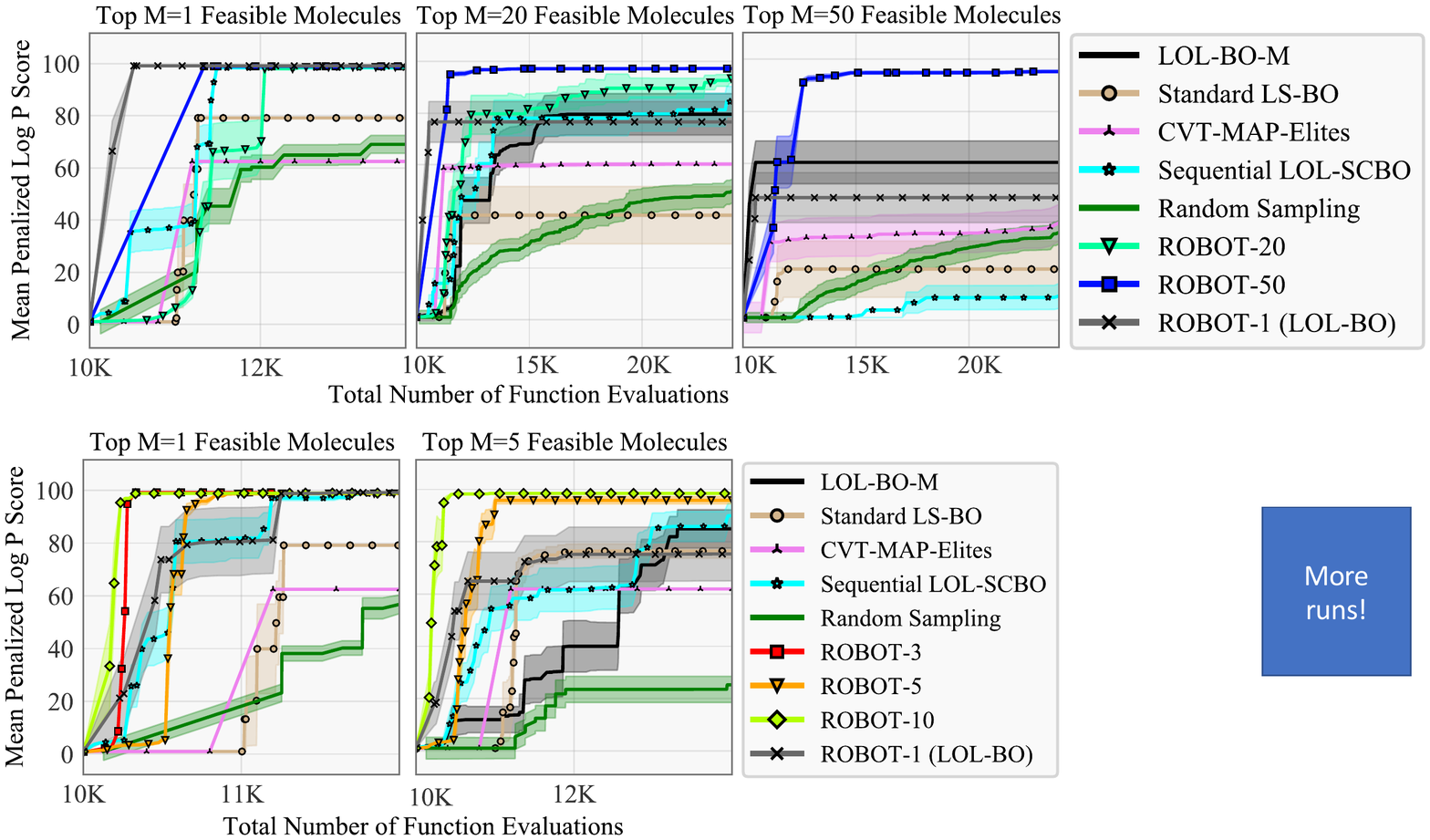}}
        \caption{\logp{} molecular optimization task. Feasible molecules have a maximum fingerprint similarity of 0.4.    }
        \label{fig:logp04} 
    \end{center}
    \vskip -0.2in
\end{figure*}
\paragraph{Results with larger $\nopt$ and relaxed constraints.}
In \autoref{fig:logp53} we depict optimization results for finding larger sets of $\nopt=1, 20$, and $50$ diverse molecules. 
Even up to asking $\ourmethod{}$ to find $50$ solutions, we incur very little slowdown compared to finding a single solution. Furthermore, \ourmethod{} is able to find a full set of 20 and 50 high-scoring molecules with a small number of additional evaluations. 
\begin{figure*}[!ht]
    \vskip 0.2in
    \begin{center}
        \centerline{\includegraphics[width=0.8\textwidth]{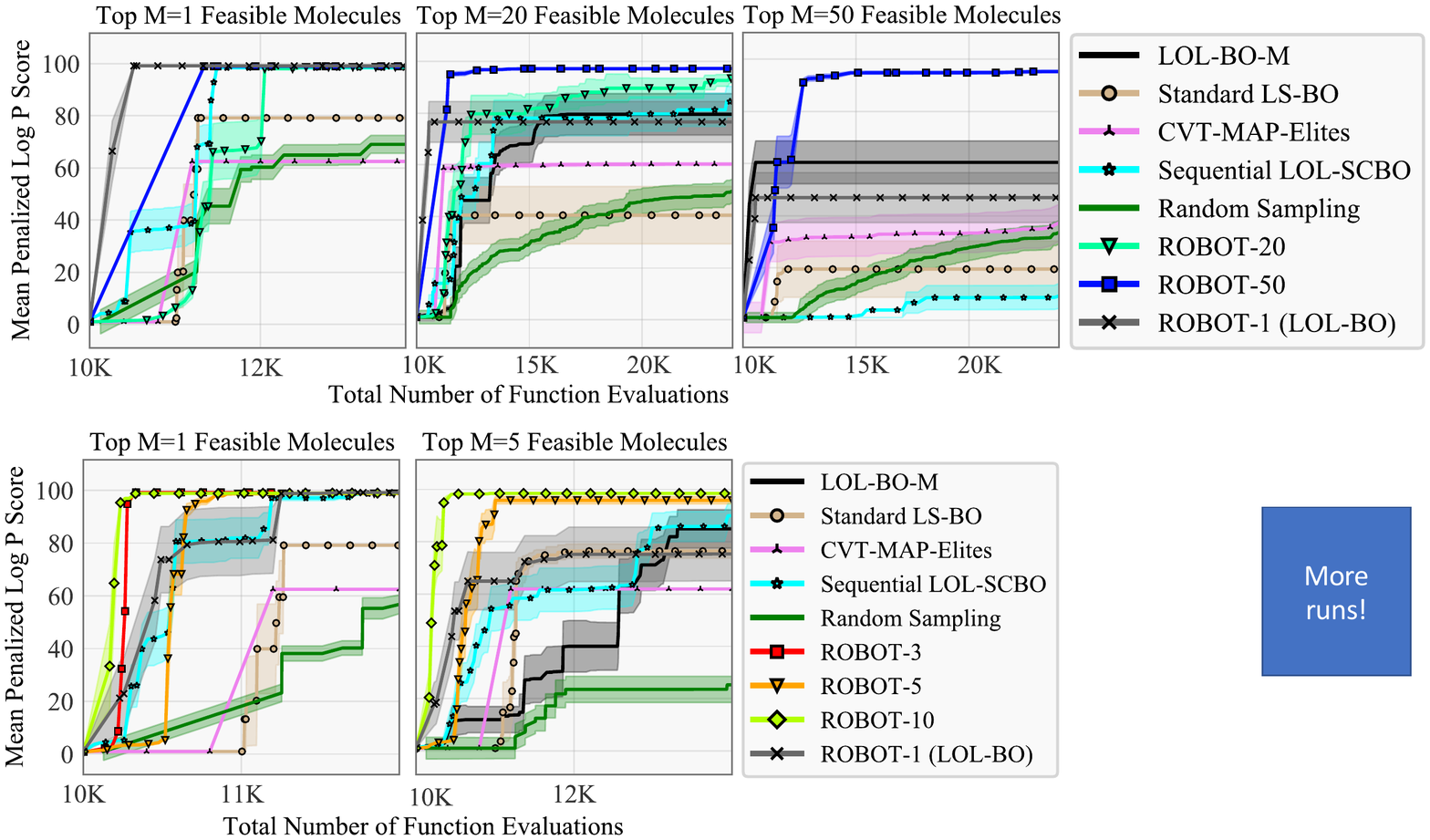}}
        \caption{\logp{} molecular optimization task. Feasible molecules have a maximum fingerprint similarity of 0.53.    }
        \label{fig:logp53} 
    \end{center}
    \vskip -0.2in
\end{figure*}

\subsubsection{Fingerprint Similarity Thresholds } 
\label{sec:fps} 
In \autoref{fig:zinc fps hist}, we provide visualization of the distribution of fingerprint similarities between molecules in the Zinc20 data-set. This distribution informed our choices $\tau = -0.4$ and $\tau = -0.53$ for molecular optimization tasks. 
\begin{figure*}[!ht]
    % \vskip 0.2in
    \begin{center}
        \centerline{\includegraphics[width=0.45\textwidth]{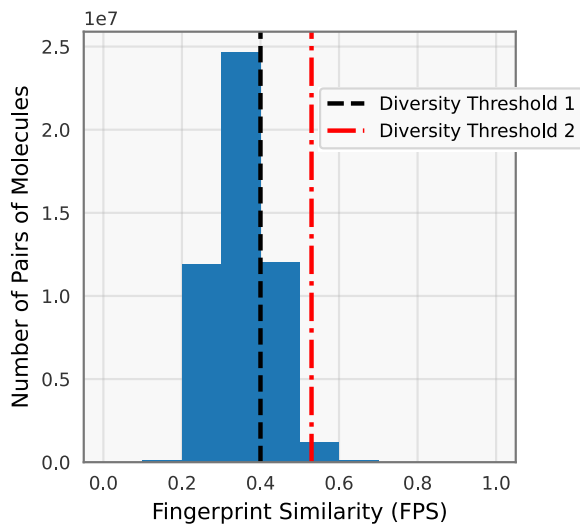}}
        \caption{ Fingerprint similarities between all pairs of $10$,$000$ randomly selected molecules from the Zinc20 data-set. Our chosen diversity constraint thresholds of $-\tau = 0.4$ (threshold 1) $-\tau= 0.53$ (threshold 2) are shown with vertical lines. }
        \label{fig:zinc fps hist} 
    \end{center}
    % \vskip -0.2in
\end{figure*}

\subsubsection{Additional Ablations} 
\label{sec:lunar multiple trs} 
In \autoref{fig:individual-trs} we plot the best objective value found by individual rank-ordered trust regions during optimization runs of \ourmethod{}-$M$ on \valt{} and Lunar Lander tasks. We observe that higher-ranking trust regions are able to converge more quickly since they are unimpeded by lower-ranking trust regions. 
\begin{figure*}[!ht]
    % \vskip 0.2in
    \begin{center}
        \centerline{\includegraphics[width=0.8\textwidth]{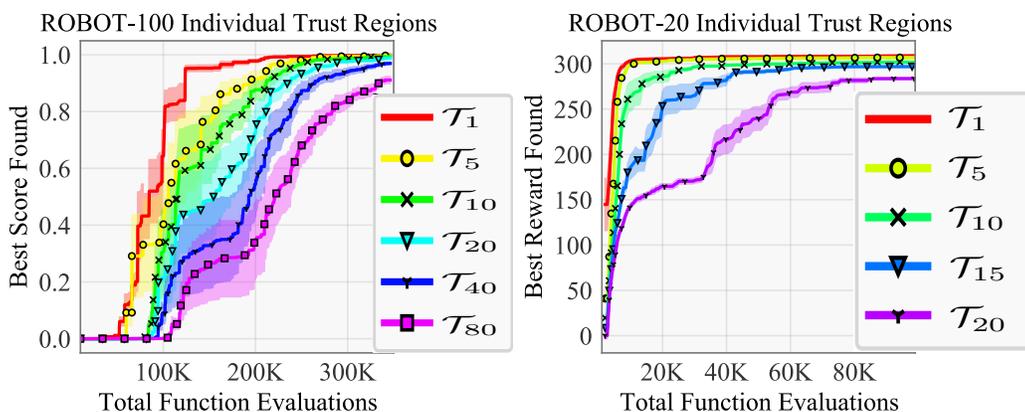}}
        \caption{ Objective found by individual trust regions during optimization with  \ourmethod{}. Higher-rank trust regions converge faster than lower-ranking trust regions. \textbf{(Left)} Optimization of \valt{} with \ourmethod{}-100, $\tau = -0.53$. \textbf{(Right)} Optimization of Lunar Lander with \ourmethod{}-20, $\tau = 0.6$.  }
        \label{fig:individual-trs} 
    \end{center}
    % \vskip -0.2in
\end{figure*}

In \autoref{fig:valt_ablation} \textbf{(Left, Middle Left)} we ablate \ourmethod{} with various $\nopt$ on \siga{} with $\tau = -0.53$. In this section, we also provide the same ablation on \valt{} with $\tau = -0.53$ (see \autoref{fig:valt053ablation}), and on both \siga{} and \valt{} with $\tau = -0.4$ (see \autoref{fig:valtsiga04ablation}). 
\begin{figure*}[!ht]
    % \vskip 0.2in
    \begin{center}
        \centerline{\includegraphics[width=0.9\textwidth]{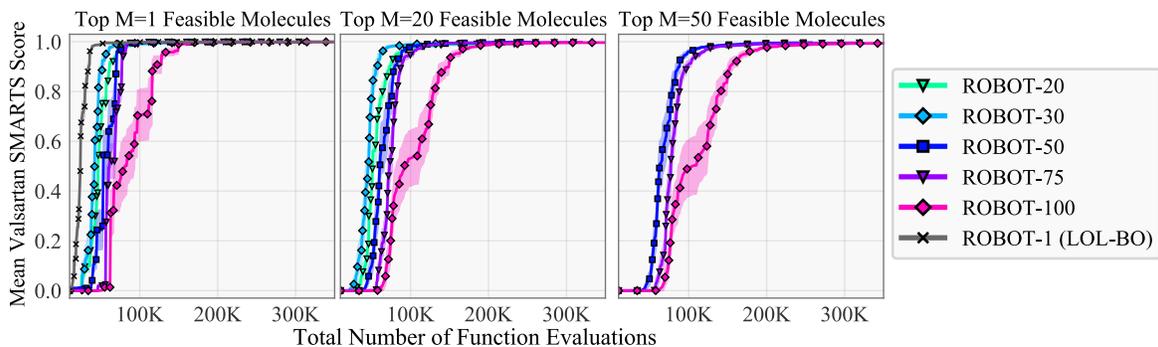}}
      \caption{\valt{} molecular optimization task. Feasible molecules have a maximum fingerprint similarity of 0.53. Ablating \ourmethod{} with various $\nopt$.}
 \label{fig:valt053ablation} 
    \end{center}
    % \vskip -0.2in
\end{figure*}
\begin{figure*}[!ht]
    % \vskip 0.2in
    \begin{center}
        \centerline{\includegraphics[width=0.8\textwidth]{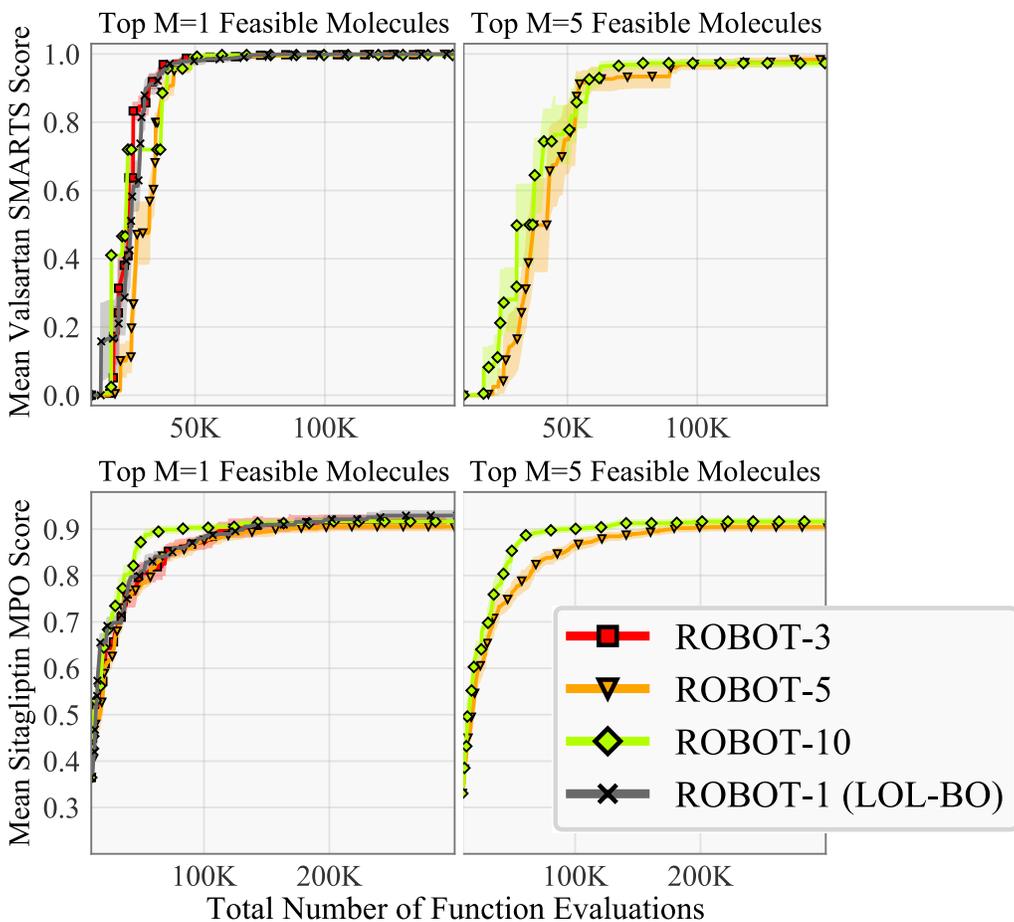}}
        \caption{\valt{} and \siga{} molecular optimization tasks. Feasible molecules have a maximum fingerprint similarity of 0.4. Ablating \ourmethod{} with various $\nopt$.    }
        \label{fig:valtsiga04ablation} 
    \end{center}
    % \vskip -0.2in
\end{figure*}

\subsection{Run-Time Considerations}
\label{sec:runtime} 

Our method scales to millions of evaluations by leveraging minibatch training made possible by the variational GP approximation used by \citet{PPGPR}. 
The total running times for the experiments in this paper is around half a day. 
The one exception to this is the stock portfolio optimization task (\autoref{sec:stocks}) - since this task requires more than 10x as many function evaluations to converge, \ourmethod{} takes roughly one week to run on this task. 

\FloatBarrier
\subsection{Global Consistency of \ourmethod{}}

\label{sec:global_convergence}
In this section we prove that \ourmethod{} converges to the optimal set $S^* = \{\bx_1^*, \ldots, \bx_M^*\}$ as the number of samples tends to infinity. Note that assumptions 4 and 5 correspond to trivial modifications of \ourmethod{} itself, rather than assumptions about the underlying objective. Furthermore, assumption three is a fairly common setting in Bayesian optimization. The primary non-trivial assumption needed in the Theorem below is assumption 2. This assumption requires a certain degree of smoothness from the objective function and the diversity constraint near points in $S^{*}$. In particular, we assume points in an epsilon ball centered at $\bx^{*}_{1}$, $B(\bx^{*}_{1}, \epsilon)$, have higher objective value than points outside the ball. Furthermore, points in $B(\bx^{*}_{2}, \epsilon)$ have higher objective value than other points in the input domain $\inputdom$ that are \textit{feasible} with respect to points in $B(\bx^{*}_{1}, \epsilon)$. Note that assumption 2 is also implied under assumption 1 if both $f(\cdot)$ and $\divf{}(\cdot)$ are continuous and $\divf(\bx^{*}_{1}, \bx^{*}_{2}) > \tau$.

We start by showing the result for $M=2$:

\begin{restatable}[M=2]{theorem}{globalconvergence_m=2}
    \label{theorem_m=2}
    Suppose that \ourmethod{} with default parameters is used under the following assumptions:
    \begin{enumerate}
        \item $S^* = \{\bx_1^*, \bx_2^*\}$ exists and is non-empty. In particular, for each $i$, $\bx^{*}_{i}$ is the unique optimizer to the optimization problem described in \autoref{sec:methods} by \autoref{eq:problem_def}.
        \item There exists an $\epsilon > 0$ such that:
            \begin{enumerate}
                \item For any $\bw_1 \in B(\bx_1^*, \epsilon) \cap \inputdom$ we have $f(\bw_1) > f(\bx)$ for all $\bx \in \inputdom \setminus B(\bx_1^*, \epsilon)$.
                \item For any $\bw_1 \in B(\bx_1^*, \epsilon) \cap \inputdom, \bw_2 \in B(\bx_2^*, \epsilon) \cap \inputdom$ we have $\delta(\bw_1, \bw_2) \geq \tau$.
                \item For any $\bw_1 \in B(\bx_1^*, \epsilon) \cap \inputdom, \bw_2 \in B(\bx_2^*, \epsilon) \cap \inputdom$ we have $f(\bw_2) > f(\bx)$ for all $\bx \in \inputdom \setminus B(\bx_2^*, \epsilon)$ s.t. $\delta(\bx, \bw_1) \geq \tau$.
            \end{enumerate}
        \item The objective $f$ is bounded and the input domain $\inputdom$ is a compact hypercube.
        \item \ourmethod{} generates new initial points when a trust region restarts. These initial points are chosen such that for any $\delta > 0$ and $x \in \inputdom$ there exists $\nu(x, \delta) > 0$ such that the probability that at least one point ends up in a ball centered at $x$ with radius $\delta$ is at least $\nu(x, \delta)$.
        \item \ourmethod{} considers any sampled point an improvement \emph{only if} it improves the current best solution by at least some constant $\gamma > 0$.
    \end{enumerate}
    Then, \ourmethod{} converges to the unique global minimizing set $S^*$.
\end{restatable}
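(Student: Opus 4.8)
The plan is to adapt the classical random‑restart convergence argument for trust‑region Bayesian optimization (as underlies the analysis of \turbo{}) to the rank‑ordered, asymmetrically constrained setting, handling the two trust regions one at a time in rank order. I would proceed in three steps: (1) show that every trust region restarts infinitely often almost surely; (2) conclude that $\tr{1}$'s incumbent is eventually trapped in $B(\bx_1^*,\epsilon)$, and in fact that $\bx'^{(t)}_{1}\to\bx_1^*$; (3) conditioned on (2), show $\tr{2}$'s constrained incumbent is eventually trapped in $B(\bx_2^*,\epsilon)$ and $\bx'^{(t)}_{2}\to\bx_2^*$, so that $S^{+}_{t}\to S^*$. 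For step (1): fix a trust region; by assumption 5 each success raises that region's \emph{own} incumbent value by at least $\gamma$, so since $f$ is bounded (assumption 3) the region can register at most $\lceil(\sup f-\inf f)/\gamma\rceil$ successes within any epoch between two consecutive restarts; each maximal run of $\rho_{fail}$ consecutive failures halves $\ell_i$, so after a bounded number of failures following the last success we reach $\ell_i<\ell_{min}$ and the region restarts. Hence each epoch consumes only finitely many evaluations, and since the budget grows without bound the region restarts infinitely often. (One bookkeeping point is handled here: a round in which $\tr{2}$'s feasible candidate set $\hat F$ is empty is counted as a failure, which prevents $\tr{2}$ from stalling and is consistent with assumption 5.)

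\textbf{Step 2 ($\tr{1}$).} By assumption 4, at each restart the fresh initialization places a point in $B(\bx_1^*,r)\cap\inputdom$ with conditional probability at least $\nu(\bx_1^*,r)>0$, for every $r>0$; combining this with step (1) and a conditional Borel--Cantelli argument, almost surely $\tr{1}$ evaluates points in $B(\bx_1^*,r)$ for every $r>0$, infinitely often. Once a point $\bw_1\in B(\bx_1^*,\epsilon)$ enters the data, assumption 2(a) makes $f(\bw_1)$ exceed $f(\bx)$ for every $\bx\in\inputdom\setminus B(\bx_1^*,\epsilon)$, and by the contrapositive any later data point that beats $\bw_1$ must also lie in $B(\bx_1^*,\epsilon)$; since $\bx'^{(t)}_{1}$ is the global $\argmax$ over the data (the recentering rule), it lies in $B(\bx_1^*,\epsilon)$ from that step onward. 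The value $f(\bx'^{(t)}_{1})$ is nondecreasing in $t$ and, once $\tr{1}$ has sampled a point of $B(\bx_1^*,r)$, is at least $\inf_{B(\bx_1^*,r)\cap\inputdom}f$; letting $r\to0$ and using continuity of $f$ (the typical case, per the remark preceding the theorem) gives $f(\bx'^{(t)}_{1})\to f(\bx_1^*)$, and uniqueness of the maximizer on the compact $\inputdom$ (assumption 1) gives $\bx'^{(t)}_{1}\to\bx_1^*$.

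\textbf{Step 3 ($\tr{2}$).} Let $t_1<\infty$ be a time past which $\bx'^{(t)}_{1}\in B(\bx_1^*,\epsilon)$. Rerunning the machinery of steps (1)--(2) for $\tr{2}$, almost surely $\tr{2}$ evaluates points in $B(\bx_2^*,\epsilon)$ infinitely often, hence at some time $t\ge t_1$; fix such a point $\bw_2$. For every $t'\ge t_1$, assumption 2(b) gives $\divf(\bw_2,\bx'^{(t')}_{1})\ge\tau$, so $\bw_2$ is an admissible competitor in the constrained $\argmax$ defining $\bx'^{(t')}_{2}$; and assumption 2(c) applied with $\bw_1=\bx'^{(t')}_{1}$ makes $f(\bw_2)$ exceed $f(\bx)$ at every data point $\bx\in\inputdom\setminus B(\bx_2^*,\epsilon)$ with $\divf(\bx,\bx'^{(t')}_{1})\ge\tau$. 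Therefore $\bx'^{(t')}_{2}\in B(\bx_2^*,\epsilon)$ for all $t'\ge t$, and the same shrinking‑radius argument as in step (2) — now using 2(b) to keep the near‑optimal points feasible as $\tr{1}$'s incumbent wanders inside its basin — yields $\bx'^{(t)}_{2}\to\bx_2^*$. Hence $S^{+}_{t}\to S^*$ almost surely.

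\textbf{Main obstacle and extension.} The delicate part is step (3): $\tr{2}$'s diversity constraint is non‑stationary, being re‑checked at every iteration against $\tr{1}$'s still‑moving incumbent, so one cannot directly inherit the monotone‑incumbent structure that drives the single‑region argument. Assumptions 2(b) and 2(c) are exactly what repair this once $\tr{1}$ is pinned in its basin: 2(b) guarantees that near‑$\bx_2^*$ points remain feasible wherever $\tr{1}$'s incumbent sits inside $B(\bx_1^*,\epsilon)$, and 2(c) guarantees such points dominate every feasible competitor outside $B(\bx_2^*,\epsilon)$, effectively decoupling the two optimizations from time $t_1$ on. Extending from $M=2$ to general $M$ is then a routine induction on the rank order, peeling off one trust region at a time under the analogous chain of feasibility and local‑dominance assumptions.
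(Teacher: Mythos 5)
Your proposal is correct and follows essentially the same route as the paper's proof: establish that each trust region restarts infinitely often (via assumptions 3 and 5 and the finite failure tolerance), use assumption 4 to sample densely near $\bx_1^*$ and pin the top incumbent via 2(a) (the paper delegates this step to the argument of Regis and Shoemaker's Theorem 1), and then repeat for the second incumbent using 2(b) for feasibility and 2(c) for dominance. The only notable difference is presentational: you track the algorithm's actual moving incumbent $\bx'^{(t)}_{1}$ when verifying $\tr{2}$'s constraint, whereas the paper fixes a single auxiliary point $\hat{\bz}_1$ near $\bx_1^*$ and defines the second running-best subsequence relative to it; both reduce to the same use of assumptions 2(b)--(c).
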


\begin{proof}
We start by observing that each trust region will collect only a finite number of samples before restarting due to conditions (3) and (5) as well as the fact that we similarly to \scbo{} use a finite success and failure tolerance.
This means that each trust region in \ourmethod{} will restart infinitely often with a fresh trust region and hence there is an infinite subsequence of initial points that satisfy (4).
Now, denote the infinite sequence of initial points collected by \ourmethod{} by $\{\hat{\bx}(t)\}_{t \geq 1}$ where $t$ is the iteration number.
We will construct a new sequence $\{\hat{\bx}_1(t)\}_{t \geq 1}$ as follows: 
\[
    \hat{\bx}_1(t) :=
    \begin{cases}
        \hat{\bx}(1) &\text{if } t=1 \\
        \hat{\bx}(t) &\text{if } f(\hat{\bx}(t)) > f(\hat{\bx}_1(t-1)) \\ 
        \hat{\bx}_1(t-1) &\text{otherwise} \\
    \end{cases}      
\]
It now follows directly from the arguments made by \citet{regis2007stochastic} in Theorem 1 and assumption (2) that $\hat{\bx}_1(t) \to \bx_1^*$ almost surely.
Next, assumptions (2) and (4) allows us to find a point $\hat{\bz}_1 \in \{\hat{\bx}_1(t)\}_{t \geq 1}$ that is arbitrarily close to $\bx_1^*$ that is both better than any point outside $B(\bx_1^*, \epsilon)$ and also satisfies the diversity constraint w.r.t. any point $\hat{\bz}_2 \in B(\bx_2^*, \epsilon) \cap \inputdom$. 
We can then construct the following subsequence
\[
    \hat{\bx}_2(t) :=
    \begin{cases}
        \hat{\bx}(1) &\text{if } t=1 \\
        \hat{\bx}(t) &\text{if } \delta(\hat{\bx}(t), \hat{\bz}_1) \geq \tau \text{ and } \delta(\hat{\bx}_2(t - 1), \hat{\bz}_1) < \tau, \\
        \hat{\bx}(t) &\text{if } f(\hat{\bx}(t)) > f(\hat{\bx}_2(t-1)) \text{ and } \delta(\hat{\bx}(t), \hat{\bz}_1) \geq \tau \text{ and } \delta(\hat{\bx}_2(t - 1), \hat{\bz}_1) \geq \tau, \\ 
        \hat{\bx}_2(t-1) &\text{otherwise} \\
    \end{cases}
\]
Following the same argument as before we have that $\hat{\bx}_2(t) \to \bx_2^*$ almost surely. 
\end{proof}

We can extend these ideas to cover any finite $M$ by extending these assumptions.

\begin{restatable}[]{corollary}{globalconvergence}
    Assume 3-5 from Theorem \ref{theorem_m=2} are satisfied. In addition, we also assume that the following is true:
    \begin{enumerate}
        \item $S^* = \{\bx_1^*, \ldots, \bx_M^*\}$ and is non-empty, where $M > 2$ is finite.
        \item There exists an $\epsilon > 0$ such that:
            \begin{enumerate}
                \item For any $\bw_1 \in B(\bx_1^*, \epsilon) \cap \inputdom$ we have $f(\bw_1) > f(\bx)$ for all $\bx \in \inputdom \setminus B(\bx_1^*, \epsilon)$.
                \item  The following holds for $j=2, \ldots, M$: For any $\bw_1 \in B(\bx_1^*, \epsilon) \cap \inputdom, \ldots, \bw_j \in B(\bx_j^*, \epsilon) \cap \inputdom$ we have $\min_{i\in \{1, \ldots, j-1\}} \delta(\bw_i, \bw_j) \geq \tau$.
                \item  The following holds for $j=2, \ldots, M$: For any $\bw_1 \in B(\bx_i^*, \epsilon) \cap \inputdom, \ldots, \bw_j \in B(\bx_j^*, \epsilon) \cap \inputdom$ we have $f(\bw_j) > f(\bx)$ for all $\bx \in \inputdom \setminus B(\bx_j^*, \epsilon)$ s.t. $\min_{i\in \{1, \ldots, j-1\}} \delta(\bx, \bw_i) \geq \tau$.
            \end{enumerate}
    \end{enumerate}
    Then \ourmethod{} converges to the unique set $S^*$. 
\end{restatable}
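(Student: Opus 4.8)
The plan is to prove the corollary by induction on $M$, using the $M=2$ case of \autoref{theorem_m=2} as the base of a scheme that is really just the $M=2$ argument iterated. The key observation is that the proof of \autoref{theorem_m=2} already exhibits the right inductive pattern: from the full sequence of initial points $\{\hat{\bx}(t)\}_{t \geq 1}$ one first extracts a running-best subsequence $\hat{\bx}_1(t) \to \bx_1^*$ via Theorem 1 of \citet{regis2007stochastic} together with assumption 2(a); then one freezes a point $\hat{\bz}_1$ in that subsequence that is close enough to $\bx_1^*$ to be (i) better than everything outside $B(\bx_1^*,\epsilon)$ and (ii) diversity-feasible with respect to the whole ball $B(\bx_2^*,\epsilon)$; and then one runs the same running-best construction on the subsequence of points feasible against $\hat{\bz}_1$, concluding $\hat{\bx}_2(t) \to \bx_2^*$. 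The corollary's assumptions 2(a)--2(c) are precisely what is needed to repeat this one more time at each level $j$.

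First I would set up the induction hypothesis: for each $j \in \{1,\ldots,M\}$ there exist frozen points $\hat{\bz}_1,\ldots,\hat{\bz}_{j-1}$ with each $\hat{\bz}_i$ lying in $B(\bx_i^*,\epsilon)\cap\inputdom$ and chosen close enough to $\bx_i^*$ that $\min_{i' < i}\delta(\hat{\bz}_{i'},\hat{\bz}_i)\geq\tau$ and that $\hat{\bz}_i$ remains diversity-feasible with respect to \emph{every} point of $B(\bx_k^*,\epsilon)\cap\inputdom$ for all $k>i$ (this last property is what makes the frozen $\hat{\bz}_i$ a safe, permanent "stand-in" for the true constraint against $\bx_i^*$, and it is exactly assumption 2(b) applied with $\bw_i=\hat{\bz}_i$ and $\bw_k$ ranging over $B(\bx_k^*,\epsilon)$, using continuity/shrinking $\epsilon$ exactly as in the $M=2$ proof). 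Given these, I would define the running-best subsequence $\hat{\bx}_j(t)$ exactly as $\hat{\bx}_2(t)$ is defined in the theorem, but with the single condition "$\delta(\cdot,\hat{\bz}_1)\geq\tau$" replaced by "$\min_{i<j}\delta(\cdot,\hat{\bz}_i)\geq\tau$". The finite-restart argument (conditions 3 and 5, finite success/failure tolerances) still guarantees infinitely many fresh initial points satisfying assumption 4, so the subsequence of points feasible against all of $\hat{\bz}_1,\ldots,\hat{\bz}_{j-1}$ is still infinite and still dense near $\bx_j^*$; then Theorem 1 of \citet{regis2007stochastic} together with assumption 2(c) for index $j$ gives $\hat{\bx}_j(t)\to\bx_j^*$ almost surely. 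Finally I would freeze a suitable $\hat{\bz}_j$ from that subsequence (near enough to $\bx_j^*$ to satisfy the three bullet-point properties above, again possible by 2(b), 2(c), and shrinking $\epsilon$), completing the induction step.

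To close the argument I would then connect this chain of subsequence limits back to the actual output set $S^+_t$ that \ourmethod{} maintains: the greedy reconstruction in \autoref{sec:methods} sets $\bx'^{(t)}_1$ to the running best over $D_t$, $\bx'^{(t)}_2$ to the best point feasible against $\bx'^{(t)}_1$, and so on, and since $D_t$ contains all the initial points $\hat{\bx}(\cdot)$, once $\hat{\bx}_1(t)$ has entered $B(\bx_1^*,\epsilon)$ the reconstructed $\bx'^{(t)}_1$ is trapped there (assumption 2(a)), which by 2(b) means $\bx'^{(t)}_2$ is forced to be feasible against something in $B(\bx_1^*,\epsilon)$, hence by 2(c) trapped in $B(\bx_2^*,\epsilon)$, and inductively $\bx'^{(t)}_j \in B(\bx_j^*,\epsilon)$ for all $j$ eventually; letting $\epsilon\to 0$ (the whole argument holds for every small $\epsilon$, with assumption 1 giving uniqueness of each $\bx_j^*$) yields $S^+_t \to S^*$ almost surely.

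The main obstacle I anticipate is the bookkeeping around the frozen points $\hat{\bz}_i$ and the quantifier order: one must choose each $\hat{\bz}_i$ \emph{after} knowing the limit $\bx_i^*$ but \emph{before} analyzing level $i+1$, and argue that such a choice is consistent across all $M$ levels simultaneously — i.e., that a single $\epsilon$ (or a nested sequence of them) works so that every $\hat{\bz}_i$ is simultaneously feasible against all higher balls. This is where assumption 2(b) does real work, and where the $M>2$ case is genuinely more than a cosmetic relabeling of the $M=2$ proof: the feasibility of $\hat{\bz}_i$ must be robust against the \emph{later, lower-ranked} frozen points as well, which requires invoking 2(b) in the form "$\min_{i'<j}$" rather than just a pairwise statement. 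The rest — the finite-restart/infinite-resampling argument, the appeal to \citet{regis2007stochastic}, and the running-best subsequence construction — is a direct transcription of the $M=2$ proof and should go through verbatim at each level.
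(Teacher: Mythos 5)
Your proposal is correct and follows exactly the route the paper intends: the paper gives no written proof of the corollary, stating only that the $M=2$ argument "extends," and your induction --- freezing $\hat{\bz}_1,\ldots,\hat{\bz}_{j-1}$, restricting to the subsequence feasible against them, and re-running the running-best/Regis--Shoemaker argument with assumptions 2(b) and 2(c) at each level $j$ --- is precisely that extension written out. Your observation that 2(b) makes each frozen $\hat{\bz}_i$ automatically feasible against the entire ball $B(\bx_k^*,\epsilon)$ for every $k>i$ (so a single $\epsilon$ suffices across all levels) is the one point the paper leaves implicit, and you handle it correctly.
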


\subsection{Limitations and Assumptions of \ourmethod{} }
\label{sec:limitations}
Similar to any Bayesian optimization method, \ourmethod{} assumes that the probabilistic surrogate model can obtain a good fit to the black box function.
Additionally, \ourmethod{} assumes that the diversity function $\divf{}$ is cheap to compute relative to the black box function.

\subsubsection{Selection of $\tau$}
\label{sec:tau}
\ourmethod{} assumes that the diversity constraints are mild enough and $M$ is small enough that finding $M$ diverse points in the search space is possible. 
We suggest choosing a $\tau$ value which represents the most relaxed constraint possible that still enforces enough diversity to be meaningful to the practitioner. 
We assume that the practitioner can evaluate the diversity function to determine what values indicate sufficient diversity for their particular application. 

\subsection{Extending \turbo{} to \ourmethod{}}
\label{sec:robot-vs-turbo}
In this section, we describe the modifications necessary to modify an implementation of \turbo{} into an implementation of \ourmethod{}. 
Note that a full BoTorch implementation of \ourmethod{} is also available, as linked to in the experimental results section of the main text. 
We give a pseudocode algorithm for \ourmethod{} (\autoref{alg:algo}). 
Lines of the algorithm in \textcolor{blue}{blue} show the parts of \ourmethod{} that differ from \turbo-$M${}.
\begin{algorithm}
	\caption{\ourmethod{} Algorithm}
	Inputs: f, $D_0$, M, \textcolor{blue}{$\delta$, $\tau$} 
	\label{alg:algo}
	\begin{algorithmic}[1]
	    \For {$i = 1, ..., M$}
	    \State Initialize trust region $\tr{_i}$ 
	    \EndFor
		\For {Every time step t}
		\State Update surrogate model on $D_t$
		\State
		\textcolor{blue}{ $S^{+}_{t}=\{\bx'^{(t)}_{1},...,\bx'^{(t)}_{\nopt}\}$, where:
\begin{align}
    \bx'^{(t)}_{1} &= \argmax_{(\bx,y) \in D_{t}} y \nonumber \\
    \bx'^{(t)}_{i} &= \argmax_{(\bx,y) \in D_{t}} y \;\; \mathrm{s.t.} \;\; \forall j < i \;\; \divf{}(\bx, \bx'^{(t)}_{j}) \geq \tau 
\end{align}
}
        \State \textcolor{blue}{Set center of trust region $\tr{i}$ to $\bx'^{(t)}_{i}$
        }
		\For {$i = 1, ..., M$}
	    \State Select candidate $x_i$ from $\tr{_i}$ using acquisition function 
	    \EndFor
	    \For {$i = 1, ..., M$}
	    \If{\textcolor{blue}{$\delta$($x_i$, $x_j$) $< \tau$ for any $j < i$}}
	    \State \textcolor{blue}{Discard $x_i$}
	    \Else{}
	    \State $y_i = f(x_i)$
	    \State Update length of $\tr{_i}$ based on $y_i$
	    \EndIf 
	    \EndFor
		\State $D_{t+1}v\leftarrow D_{t} \cup (\bx, \by)$
	    \EndFor
	\end{algorithmic} 
\end{algorithm} 
We provide additional clarification on the differences highlighted between \ourmethod{} and \turbo{}-$M$ here. Both methods use $M$ hyper-rectangular trust regions and dynamically update the length and center of each trust region using the dynamics of \cite{TuRBO}. However, \ourmethod{} performs acquisition to find a set of $M$ diverse solutions according to a diversity function $\divf{}$ and threshold $\tau$, with each trust region responsible for one solution, while \turbo{}-$M$ finds a single solution from across the trust regions. Accordingly, \turbo{}-$M$ allocates budget related to the strength of each trust region's incumbent, and some trust regions may see very few allocations if they start in poor regions of the search space. 

In contrast, \ourmethod{} selects the same number of candidates from each trust region on each step of optimization, maintains a rank-ordering of the $M$ trust regions and re-centers the $M$ trust regions in rank-order on best diverse set of points from the entire shared data history $S^{+}_{t}=\{\bx'^{(t)}_{1},...,\bx'^{(t)}_{\nopt}\}$. \turbo{}-$M$ re-centers each trust region on the best candidate queried by that individual trust region. Finally, \ourmethod{} discards candidates from lower-ranking trust regions if they are not sufficiently diverse from the candidates from higher-ranking trust regions while \turbo{}-$M$ does not discard candidates. 

\thispagestyle{empty}

\end{document}